\crefname{hypothesis}{Hypothesis}{Hypotheses}
\title{Quaternion tensor left ring decomposition and application for color image inpainting \thanks{Submitted to the editors. 
\funding{The work of the second author was supported by the University of Macau (MYRG2022-00108-FST, MYRG-CRG2022-00010-ICMS), The Science and Technology Development Fund, Macau S.A.R (0036/2021/AGJ). The work of the third author was supported in part by the National Key Research and Development Program of China (2022YFE0112200), the National Natural Science Foundation of China (U21A20520, 62325204), Science and Technology Project of Guangdong Province (2022A0505050014), the Key-Area Research and Development Program of Guangzhou City (202206030009). The work of the fourth author was supported by the Science and Technology Planning Project of Guangzhou City, China (201907010043).}}}
\author{Jifei Miao\thanks{The School of Mathematics and Statistics, Yunnan University, Kunming, Yunnan, 650091, China
  (\email{jifmiao@163.com}).}
\and Kit Ian Kou\thanks{The Department of Mathematics, Faculty of
	Science and Technology, University of Macau, Macau 999078, China
  (\email{kikou@umac.mo}).}
\and Hongmin Cai\thanks{The School of Computer Science and Engineering, South China University of Technology, Guangzhou, Guangdong, 510641, China
	(\email{hmcai@scut.edu.cn}).}
\and Lizhi Liu\thanks{The Department of Radiology, Sun Yat-sen University Cancer Center,State Key Laboratory of Oncology in South China, Collaborative Innovation Center for Cancer Medicine, Guangdong Key Laboratory of Nasopharyngeal Carcinoma Diagnosis and Therapy, 651 Dongfeng Road East, Guangzhou, Guangdong, 510060, China
	(\email{liulizh@sysucc.org.cn}).}
}
\begin{document}
\nolinenumbers

\maketitle

\begin{abstract}
In recent years, tensor networks have emerged as powerful tools for solving large-scale optimization problems. One of the most promising tensor networks is the tensor ring (TR) decomposition,  which achieves circular dimensional permutation invariance in the model through the utilization of the trace operation and equitable treatment of the latent cores. On the other hand, more recently, quaternions have gained significant attention and have been widely utilized in color image processing tasks due to their effectiveness in encoding color pixels by considering the three color channels as a unified entity. Therefore, in this paper, based on the left quaternion matrix multiplication, we propose the quaternion tensor left ring (QTLR) decomposition, which inherits the powerful and generalized representation abilities of the TR decomposition while leveraging the advantages of quaternions for color pixel representation. In addition to providing the definition of QTLR decomposition and an algorithm for learning the QTLR format, the paper further proposes a low-rank quaternion tensor completion (LRQTC) model and its algorithm for color image inpainting based on the defined QTLR decomposition. Finally, extensive experiments on color image inpainting demonstrate that the proposed LRQTC method is highly competitive.
\end{abstract}

\begin{keywords}
Quaternion tensor left ring decomposition, quaternion tensor low-rank completion, color image inpainting
\end{keywords}

\begin{MSCcodes}
\end{MSCcodes}

\section{Introduction}
Tensor networks have gained prominence in recent years as powerful tools for tackling large-scale optimization problems \cite{kolda2009tensor,chen2020tensor,oseledets2011tensor,zhao2016tensor,zheng2021fully}. Among them, the tensor ring (TR) decomposition \cite{zhao2016tensor} is one of the most advanced tensor networks. The TR decomposition is a method that represents an $N$-th order tensor $\mathcal{T}\in \mathbb{R}^{I_{1}\times I_{2}\times \ldots \times I_{N}}$ by multiplying a sequence of third-order tensors $\mathcal{Z}_{n}$, $n=1,2,\ldots, N$ in a circular manner. Specifically, it can be expressed in an element-wise form given by
\begin{equation}\label{trformat}
	\mathcal{T}(i_{1},i_{2},\ldots,i_{N})={\rm{Tr}}\{\mathcal{Z}_{1}(i_{1})\mathcal{Z}_{2}(i_{2})\ldots\mathcal{Z}_{N}(i_{N})\},
\end{equation}
where $\mathcal{T}(i_{1},i_{2},\ldots,i_{N})$ denotes the $(i_{1},i_{2},\ldots,i_{N})$-th element of $\mathcal{T}$, ${\rm{Tr}}\{\cdot\}$ denotes the trace operator, $\mathcal{Z}_{n}(i_{n})\in\mathbb{R}^{r_{n}\times r_{n+1}}$  denotes the $i_{n}$-th lateral slice of the TR factor $\mathcal{Z}_{n}$, the last TR factor $\dot{\mathcal{Z}}_{N}$ is of size $r_{N}\times I_{N}\times r_{1}$, \emph{i.e.}, $r_{N+1} = r_{1}$. The TR decomposition has been widely utilized in various image processing tasks due to its powerful and generalized representation ability. In particular, the TR-based low-rank tensor completion (LRTC) methods for image inpainting have been extensively studied recently \cite{yuan2019tensor,huang2020robust,qiu2022noisy,wu2023tensor}. For example, Wang et al. presented a TR-based completion algorithm in \cite{wang2017efficient}, which involves alternately updating each TR factor. Nevertheless, the performance of the algorithm is influenced by the pre-defined TR-rank, leading to a significant increase in computational cost. In \cite{yuan2019tensor}, Yuan et al. addressed these challenges by applying matrix nuclear norm regularization to the mode-2 unfolding of each TR factor, thereby improving the stability of the performance. In \cite{yu2019tensor,huang2020provable}, the authors proposed a TR nuclear norm minimization model using a tensor circular unfolding scheme for tensor completion. Notably, this approach does not rely on a pre-defined TR-rank and demonstrates superior performance compared to previous TR decomposition-based methods. However, when dealing with color pixels comprising RGB channels, real-valued third-order tensors may not fully exploit the strong correlation among the three channels. This limitation arises from the fact that real-valued third-order tensors represent color images by simply concatenating the RGB channels together, treating both the `intra-channel relationship' (the relationship within each channel) and the `spatial relationship' (the relationship between pixels) equally \cite{miao2023quaternion1}.

On the other hand, quaternions have gained considerable attention in the field of color image processing as a more suitable tool for representing color pixels. Concretely, the quaternion-based method encodes the RGB three-channel pixel values on the three imaginary parts of a quaternion \cite{li2015quaternion}. That is
\begin{equation}
	\label{equ1}
	\dot{t}=0+t_{r}i+t_{g}j+t_{b}k,
\end{equation}
where $\dot{t}$ denotes a color pixel, $t_{r}$, $t_{g}$, and $t_{b}$ are RGB three-channel pixel values, $i$, $j$, and $k$ are the three imaginary units. While both real-valued third-order tensors and quaternion matrices can be utilized for representing color images, quaternion matrices, being a novel representation, possess more favorable characteristics and advantages in this context. Quaternions treat the three channels of color pixels as a cohesive entity \cite{miao2021color,chen2022color,jia2021structure,chen2019low}, thereby effectively preserving the intra-channel relationship. Hence, quaternion matrices, particularly their low-rank approximation models, have been extensively utilized for color image processing tasks recently. For instance, quaternion matrix rank minimization metoods \cite{jia2019robust,liu2022randomized,jia2022non,yang2022quaternion} and quaternion matrix factorization metoods \cite{chen2023quaternion,miao2021color}. These methods have achieved remarkable results in tasks such as color image inpainting and color image denoising.
While there has been a significant amount of research progress on quaternion matrices recently, the study on quaternion tensors has just begun, particularly in the context of quaternion tensor networks\footnote{The term `tensor network' is synonymous with the commonly used phrase `tensor decomposition' \cite{wang2023tensor}. Within the tensor networks family, CP, Tucker, tensor train (TT), and TR decompositions are all included.}, which remains largely unexplored. The research on quaternion tensors goes beyond a mere expansion of quaternion theory; it is primarily driven by applications. Most notably, for color videos, a third-order quaternion tensor is required for representation in the most intuitive manner \cite{he2023eigenvalues,miao2020low}. Additionally, there has been a proliferation of techniques for data reshuffling or dimensionality enhancement, such as KA \cite{bengua2017efficient}, OKA \cite{zhang2022effective}, Hankelization \cite{zheng2021tensor}, and methods that leverage non-local data similarity to rearrange image or video data \cite{li2019low}. These methods typically bring certain prior information within the original data into clearer focus, leading to substantial improvements in the final processing outcomes for specific tasks. However, these techniques generally result in an increase in data dimensions. Therefore, for color data, the exploration of higher-order quaternion arrays, namely higher-order quaternion tensors, becomes particularly essential. Currently, research on quaternion tensors is primarily focused on their singular value decomposition methods \cite{qin2022singular,miao2023quaternion}. However, the singular value decomposition of quaternion tensors often involves a high computational workload, making it unsuitable for processing large-scale data in practical applications. Thus, it is necessary to study the theory of quaternion tensor networks, which involves representing large quaternion tensors using relatively smaller quaternion tensors. This approach helps alleviate the challenges of storage and processing of large-scale data.

Consequently, in this paper, we aim to propose the quaternion tensor left ring (QTLR)\footnote{
	The term `left' originates from our use of left quaternion matrix multiplication (\emph{see} Definition \ref{landrqtmm})  to define quaternion tensor ring decomposition.} decomposition, which will inherit the powerful and generalized representation capabilities of the TR decomposition while leveraging the advantages of quaternions for color pixel representation. It is important to note that QTLR diverges from a straightforward extension of TR to quaternions, primarily due to the non-commutativity of quaternion multiplication. This inherent non-commutativity gives rise to disparities in the definitions and associated properties of QTLR in comparison to TR, which is the rationale behind the introduction of QTLR as a distinct concept from TR. Furthermore, as an important application of the defined QTLR, we propose a low-rank quaternion tensor completion (LRQTC) method based on QTLR decomposition to address the inpainting task in color images. This method can mitigate the limitations of quaternion matrix-based approaches, which are not suitable for higher-dimensional quaternion data, and tensor-based methods, which may be unable to distinguish between the intra-channel relationship and spatial relationship of color pixels. Therefore, the proposed QTLR-based LRQTC method is expected to make further advancements in the inpainting of color images compared to existing methods.

We outline the primary contributions of this paper as follows:
\begin{itemize}
	\item We define the QTLR decomposition for quaternion tensors and prove its cyclic permutation property. It is worth noting that when quaternion tensors degenerate into real tensors, the definition of QTLR decomposition and the cyclic permutation property will degenerate into their corresponding real counterparts as presented in \cite{zhao2016tensor}.
	 Furthermore, inspired by the TR-SVD algorithm introduced in \cite{zhao2016tensor}, we also present the QTLR-QSVD algorithm for learning the QTLR format.
	\item We generalize the tensor circular unfolding scheme from \cite{yu2019tensor,huang2020provable} to quaternion tensors and define the quaternion tensor circular unfolding. For the circular unfolding quaternion matrices that satisfy a certain condition, we prove the relationship between their rank and the defined QTLR-rank. Based on this, we propose a LRQTC model along with its corresponding computational method, which can be considered as an important application of QTLR decomposition.
	\item We extend a tensor augmentation technique called OKA \cite{zhang2022effective} to quaternion matrices, enabling the transformation of quaternion matrices into higher-order quaternion tensors. Subsequently, we apply the proposed LRQTC method to color image inpainting tasks. Experimental results validate the competitiveness of it.
\end{itemize}

The reminder of this paper is organized as follows. In Section \ref{sec:preliminary}, we present certain notations and foundational concepts pertaining to quaternion algebra. This encompasses quaternion matrices and quaternion tensors. In Section \ref{sec:main_1}, we provide the definition of QTLR along with its associated properties. Additionally, within this section, we present a learning algorithm for the QTLR format, termed QTLR-QSVD. In Section \ref{sec:main_2}, we propose an LRQTC model along with its corresponding algorithm. Section \ref{sec:main_3_3} outlines the concrete process of color image inpainting and presents the experimental results. The conclusion is ultimately provided in Section \ref{sec:main_5}.

\section{Preliminary}
\label{sec:preliminary}
Within this section, we introduce specific notations and fundamental principles related to the realm of quaternion algebra, covering quaternion matrices and quaternion tensors.

\subsection{Notations}
In this paper, $\mathbb{R}$, $\mathbb{C}$, and $\mathbb{H}$ respectively denote the real space, complex space, and quaternion space. A scalar, a vector, a matrix, and a tensor are written as $a$, $\mathbf{a}$, $\mathbf{A}$, and $\mathcal{A}$ respectively. $\dot{a}$,  $\dot{\mathbf{a}}$, $\dot{\mathbf{A}}$, and $\dot{\mathcal{A}}$ respectively represent a quaternion scalar, a quaternion vector, a quaternion matrix, and a quaternion tensor. The $(i_{1},i_{2},\ldots,i_{N})$-th element of $\dot{\mathcal{A}}\in\mathbb{H}^{I_{1}\times I_{2} \times\ldots \times I_{N}}$ is denoted as $\dot{\mathcal{A}}(i_{1},i_{2},\ldots,i_{N})$. $(\cdot)^{\ast}$, $(\cdot)^{T}$, and $(\cdot)^{H}$ denote the conjugate, transpose, and conjugate transpose, respectively. ${\rm{rank}}(\cdot)$ and ${\rm{Tr}}\{\cdot\}$ respectively denote the rank and trace operators. $\mathfrak{R}(\cdot)$ denotes the real part of quaternion (scalar, vector, matrix, and tensor). ${\rm{diag}}(\cdot)$, ${\rm{reshape}}(\cdot)$, and ${\rm{ permute}}(\cdot)$ are command operations in \emph{MATLAB} that respectively represent the generation of a diagonal matrix, reshaping of arrays, and rearrangement of array dimensions. In addition, $\|\cdot\|_{F}$, $\|\cdot\|_{w,\ast}$, and $\langle\cdot, \cdot\rangle$ are respectively the Frobenius norm, the weighted nuclear norm \cite{DBLP:journals/ijon/YuZY19}, and the inner product operation.

\subsection{Introduction to quaternions}
Quaternion was introduced by Hamilton \cite{hamilton1866elements}. A quaternion $\dot{q}\in\mathbb{H}$   has a Cartesian form given by:
\begin{equation}
	\label{equ2}
	\dot{q}=q_{0}+q_{1}i+q_{2}j+q_{3}k,
\end{equation}
where $q_{l}\in\mathbb{R}\: (l=0,1,2,3)$ are called its components, $i$, $j$, and $k$ are
the three imaginary units related through the famous relations:
\begin{align}
	\left\{
	\begin{array}{lc}
		i^{2}=j^{2}=k^{2}=ijk=-1,\\
		ij=-ji=k,
		jk=-kj=i, 
		ki=-ik=j.
	\end{array}
	\right.
\end{align}
Quaternions have similar rules for addition, subtraction, multiplication, and division as complex numbers, as well as similar definitions for conjugation and modulus. However, the difference lies in the non-commutativity property of quaternion multiplication. That is, in general $\dot{p}\dot{q}\neq \dot{q}\dot{p}$. 

A multidimensional array or an $N$-th order tensor is named a quaternion tensor if its elements are quaternions (quaternion matrices can be regarded as second-order quaternion tensors), \emph{i.e.}, $\dot{\mathcal{T}}=(\dot{t}_{i_{1}i_{2}\ldots i_{N}})\in\mathbb{H}^{I_{1}\times I_{2} \times\ldots \times I_{N}}
=\mathcal{T}_{0}+\mathcal{T}_{1}i+\mathcal{T}_{2}j+\mathcal{T}_{3}k$, where $\mathcal{T}_{l}\in\mathbb{R}^{I_{1}\times I_{2} \times\ldots \times I_{N}}\: (l=0,1,2,3)$, $\dot{\mathcal{T}}$ is pure if $\mathcal{T}_{0}$ is a zero tensor \cite{miao2020low}. The definition of the inner product between two quaternion tensors, $\dot{\mathcal{X}}\in\mathbb{H}^{I_{1}\times I_{2} \times\ldots \times I_{N}}$ and $\dot{\mathcal{Y}}\in\mathbb{H}^{I_{1}\times I_{2} \times\ldots \times I_{N}}$, is given by: $\langle\dot{\mathcal{X}}, \dot{\mathcal{Y}}\rangle=\sum_{i_{1}=1}^{I_{1}}\sum_{i_{2}=1}^{I_{2}}\ldots\sum_{i_{N}=1}^{I_{N}}\dot{x}_{i_{1}i_{2}\ldots i_{N}}^{\ast}\dot{y}_{i_{1}i_{2}\ldots i_{N}}$. The Frobenius norm of quaternion tensor $\dot{\mathcal{X}}$ is $\|\dot{\mathcal{X}}\|_{F}=\sqrt{\langle\dot{\mathcal{X}}, \dot{\mathcal{X}}\rangle}$.

The most common approach in studying higher-order tensors is to unfold them into matrices.
Thus, we extend three unfolding methods for real tensors in \cite{zhao2016tensor} to quaternion tensors.
\begin{definition}
	(\textbf{Multi-Index Operation} \cite{qiu2022noisy}) The multi-index operation is defined as follows:
	\begin{equation*}
		\overline{i_{1}i_{2}\ldots i_{N}}=i_{1}+(i_{2}-1)I_{1}+(i_{3}-1)I_{1}I_{2}+\ldots+(i_{N}-1)\prod_{n=1}^{N-1}I_{n},
	\end{equation*}
where $i_{n}\in [I_{n}]$.
\end{definition}

\begin{definition}
	(\textbf{k-Unfolding}) Let $\dot{\mathcal{T}}\in\mathbb{H}^{I_{1}\times I_{2} \times\ldots \times I_{N}}$ be an $N$-th order quaternion tensor, the k-unfolding of $\dot{\mathcal{T}}$ is a quaternion matrix,  denoted by $\dot{\mathbf{T}}_{\langle k\rangle}$ of size $\prod_{n=1}^{k}I_{n}\times\prod_{n=k+1}^{N}I_{n}$, whose elements are defined by
	\begin{equation*}
	\dot{\mathbf{T}}_{\langle k\rangle}(m,n)=\dot{\mathcal{T}}(i_{1},i_{2},\ldots,i_{N}),	
	\end{equation*}
where $m=\overline{i_{1}i_{2}\ldots i_{k}}$, $n=\overline{i_{k+1}i_{k+2}\ldots i_{N}}$.
\end{definition}

\begin{definition}
	(\textbf{Mode-k Unfolding}) Let $\dot{\mathcal{T}}\in\mathbb{H}^{I_{1}\times I_{2} \times\ldots \times I_{N}}$ be an $N$-th order quaternion tensor, the mode-k unfolding of $\dot{\mathcal{T}}$ is a quaternion matrix,  denoted by $\dot{\mathbf{T}}_{[k]}$ of size $I_{k}\times\prod_{n\neq k}I_{n}$, whose elements are defined by
	\begin{equation*}
		\dot{\mathbf{T}}_{[k]}(i_{k},t)=\dot{\mathcal{T}}(i_{1},i_{2},\ldots,i_{N}),	
	\end{equation*}
	where $t=\overline{i_{k+1}\ldots i_{N}i_{1}\ldots i_{k-1}}$.
\end{definition}

\begin{definition}
	(\textbf{Classical Mode-k Unfolding}) Let $\dot{\mathcal{T}}\in\mathbb{H}^{I_{1}\times I_{2} \times\ldots \times I_{N}}$ be an $N$-th order quaternion tensor, the classical mode-k unfolding of $\dot{\mathcal{T}}$ is a quaternion matrix,  denoted by $\dot{\mathbf{T}}_{(k)}$ of size $I_{k}\times\prod_{n\neq k}I_{n}$, whose elements are defined by
	\begin{equation*}
		\dot{\mathbf{T}}_{(k)}(i_{k},t)=\dot{\mathcal{T}}(i_{1},i_{2},\ldots,i_{N}),	
	\end{equation*}
	where $t=\overline{i_{1}\ldots i_{k-1}i_{k+1}\ldots i_{N}}$.
\end{definition}

In order to provide the definition of QTLR decomposition, we first introduce the left and right quaternion matrix multiplications. 
\begin{definition}\label{landrqtmm}
	(\textbf{Left and Right Quaternion Matrix Multiplications} \cite{schulz2011widely}) Given two quaternion matrices $\dot{\mathbf{A}}\in \mathbb{H}^{M\times N}$ and $\dot{\mathbf{B}}\in \mathbb{H}^{N\times P}$, the left and right multiplications are respectively defined as
	\begin{equation}\label{qmp}
		(\dot{\mathbf{A}}\cdot_{L}\dot{\mathbf{B}})_{mp}=\sum_{n=1}^{N}\dot{a}_{mn}\dot{b}_{np} \quad  \text{and} \quad (\dot{\mathbf{A}}\cdot_{R}\dot{\mathbf{B}})_{mp}=\sum_{n=1}^{N}\dot{b}_{np}\dot{a}_{mn}.
	\end{equation}
\end{definition}	
Note that due to the non-commutativity of quaternion multiplication, generally $\dot{\mathbf{A}}\cdot_{L}\dot{\mathbf{B}}\neq\dot{\mathbf{A}}\cdot_{R}\dot{\mathbf{B}}$. For simplicity,  we also define $\dot{a}\cdot_{L}\dot{b}=\dot{a}\dot{b}$ and  $\dot{a}\cdot_{R}\dot{b}=\dot{b}\dot{a}$ for quaternion scalars $\dot{a}$ and $\dot{b}$. Additionally, if we do not specify whether it is left multiplication or right multiplication, it is assumed to be left multiplication, \emph{i.e., $\dot{\mathbf{A}}\dot{\mathbf{B}}=\dot{\mathbf{A}}\cdot_{L}\dot{\mathbf{B}}$}. The defined left and right quaternion matrix multiplications have the following associativity property \cite{schulz2013using}:
\begin{equation}
	(\dot{\mathbf{A}}\cdot_{L}\dot{\mathbf{B}})\cdot_{L}\dot{\mathbf{C}}=\dot{\mathbf{A}}\cdot_{L}(\dot{\mathbf{B}}\cdot_{L}\dot{\mathbf{C}}) \quad \text{and} \quad (\dot{\mathbf{A}}\cdot_{R}\dot{\mathbf{B}})\cdot_{R}\dot{\mathbf{C}}=\dot{\mathbf{A}}\cdot_{R}(\dot{\mathbf{B}}\cdot_{R}\dot{\mathbf{C}}). 
\end{equation}
However, in general
\begin{equation}\label{hunhebudeng}
	(\dot{\mathbf{A}}\cdot_{L}\dot{\mathbf{B}})\cdot_{R}\dot{\mathbf{C}}\neq\dot{\mathbf{A}}\cdot_{L}(\dot{\mathbf{B}}\cdot_{R}\dot{\mathbf{C}}) \quad \text{and} \quad (\dot{\mathbf{A}}\cdot_{R}\dot{\mathbf{B}})\cdot_{L}\dot{\mathbf{C}}\neq\dot{\mathbf{A}}\cdot_{R}(\dot{\mathbf{B}}\cdot_{L}\dot{\mathbf{C}}). 
\end{equation}
In addition, the property that ${\rm{rank}}(\dot{\mathbf{A}}\cdot_{L}\dot{\mathbf{B}})\leq\min({\rm{rank}}(\dot{\mathbf{A}}),{\rm{rank}}(\dot{\mathbf{B}}))$ has been proven in \cite{chen2020low}. In the following theorem, we demonstrate the same property for the right multiplication of two quaternion matrices.

\begin{theorem}
	For any two quaternion matrices $\dot{\mathbf{A}}\in \mathbb{H}^{M\times N}$ and $\dot{\mathbf{B}}\in \mathbb{H}^{N\times P}$, we have 
	\begin{equation}\label{ranlr1}
		{\rm{rank}}(\dot{\mathbf{A}}\cdot_{R}\dot{\mathbf{B}})\leq\min({\rm{rank}}(\dot{\mathbf{A}}),{\rm{rank}}(\dot{\mathbf{B}})).
	\end{equation}
\end{theorem}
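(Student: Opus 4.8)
The plan is to reduce the right-multiplication bound to the already-established left-multiplication bound of \cite{chen2020low} by composing with a rank-preserving involution on quaternion matrices. The crucial observation is that quaternion conjugation is an anti-automorphism, $(\dot{p}\dot{q})^{\ast}=\dot{q}^{\ast}\dot{p}^{\ast}$, which is precisely the mechanism that turns a right product into a left product and thereby lets us recycle the known inequality rather than redoing the argument for $\cdot_{R}$ from scratch.

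First I would establish the elementwise identity $(\dot{\mathbf{A}}\cdot_{R}\dot{\mathbf{B}})^{\ast}=\dot{\mathbf{A}}^{\ast}\cdot_{L}\dot{\mathbf{B}}^{\ast}$. Applying the conjugate to the $(m,p)$-entry of the right product given in \eqref{qmp} and using the anti-automorphism property on each summand yields
\[
\Big(\sum_{n=1}^{N}\dot{b}_{np}\dot{a}_{mn}\Big)^{\ast}=\sum_{n=1}^{N}\dot{a}_{mn}^{\ast}\dot{b}_{np}^{\ast}=(\dot{\mathbf{A}}^{\ast}\cdot_{L}\dot{\mathbf{B}}^{\ast})_{mp},
\]
which is exactly the claimed identity.

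Next I would invoke the fact that quaternion conjugation preserves rank, ${\rm{rank}}(\dot{\mathbf{M}}^{\ast})={\rm{rank}}(\dot{\mathbf{M}})$ for any quaternion matrix $\dot{\mathbf{M}}$. Combining this with the identity above and the left-multiplication bound from \cite{chen2020low} gives the chain
\begin{align*}
{\rm{rank}}(\dot{\mathbf{A}}\cdot_{R}\dot{\mathbf{B}})
&={\rm{rank}}\big((\dot{\mathbf{A}}\cdot_{R}\dot{\mathbf{B}})^{\ast}\big)
={\rm{rank}}(\dot{\mathbf{A}}^{\ast}\cdot_{L}\dot{\mathbf{B}}^{\ast})\\
&\le\min\big({\rm{rank}}(\dot{\mathbf{A}}^{\ast}),{\rm{rank}}(\dot{\mathbf{B}}^{\ast})\big)
=\min\big({\rm{rank}}(\dot{\mathbf{A}}),{\rm{rank}}(\dot{\mathbf{B}})\big),
\end{align*}
which establishes \eqref{ranlr1}.

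The only genuinely delicate point, and the one I expect to be the main obstacle, is the rank-invariance under conjugation: because quaternion multiplication is non-commutative, the usual real/complex reasoning cannot be transplanted verbatim. I would justify it through the complex adjoint map $\chi$, under which ${\rm{rank}}(\dot{\mathbf{M}})=\frac{1}{2}{\rm{rank}}(\chi(\dot{\mathbf{M}}))$ over $\mathbb{C}$ and $\chi(\dot{\mathbf{M}}^{\ast})$ differs from $\chi(\dot{\mathbf{M}})$ only by permutation and conjugation operations that leave the complex rank unchanged. An equivalent route, should the conjugate version prove awkward to cite, is to use the transpose identity $(\dot{\mathbf{A}}\cdot_{R}\dot{\mathbf{B}})^{T}=\dot{\mathbf{B}}^{T}\cdot_{L}\dot{\mathbf{A}}^{T}$ together with ${\rm{rank}}(\dot{\mathbf{M}}^{T})={\rm{rank}}(\dot{\mathbf{M}})$; the remainder of the argument is identical.
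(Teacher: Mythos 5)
Your reduction has a genuine gap, and it sits exactly at the point you flagged as delicate: the claim ${\rm{rank}}(\dot{\mathbf{M}}^{\ast})={\rm{rank}}(\dot{\mathbf{M}})$ is false for quaternion matrices, and so is the fallback ${\rm{rank}}(\dot{\mathbf{M}}^{T})={\rm{rank}}(\dot{\mathbf{M}})$. Your algebraic identity $(\dot{\mathbf{A}}\cdot_{R}\dot{\mathbf{B}})^{\ast}=\dot{\mathbf{A}}^{\ast}\cdot_{L}\dot{\mathbf{B}}^{\ast}$ is correct, but the conjugation that converts the right product into a left product destroys rank. A concrete counterexample:
\begin{equation*}
\dot{\mathbf{M}}=\left[\begin{array}{cc} 1 & i\\ j & k\end{array}\right],
\qquad
\dot{\mathbf{M}}^{\ast}=\left[\begin{array}{cc} 1 & -i\\ -j & -k\end{array}\right].
\end{equation*}
The columns of $\dot{\mathbf{M}}$ are right linearly independent: if $(1,j)^{T}\alpha+(i,k)^{T}\beta=\mathbf{0}$, then $\alpha=-i\beta$ and $j(-i\beta)+k\beta=-ji\beta+k\beta=2k\beta=0$, forcing $\alpha=\beta=0$; hence ${\rm{rank}}(\dot{\mathbf{M}})=2$. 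But in $\dot{\mathbf{M}}^{\ast}$ the second column equals the first column right-multiplied by $-i$, because $1\cdot(-i)=-i$ and $(-j)(-i)=ji=-k$; hence ${\rm{rank}}(\dot{\mathbf{M}}^{\ast})=1$. (Likewise ${\rm{rank}}(\dot{\mathbf{M}}^{T})=1$, since the second column of $\dot{\mathbf{M}}^{T}$ is the first column times $j$, so the transpose route fails for the same reason.) The same example refutes the adjoint-map justification: here ${\rm{rank}}(\chi(\dot{\mathbf{M}}))=4$ while ${\rm{rank}}(\chi(\dot{\mathbf{M}}^{\ast}))=2$, so $\chi(\dot{\mathbf{M}}^{\ast})$ cannot be obtained from $\chi(\dot{\mathbf{M}})$ by rank-preserving permutations and conjugations.

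The failure is structural, so the strategy cannot be patched by choosing a different involution. Among the three natural involutions (entrywise conjugation, transposition, conjugate transposition), the only one that preserves quaternion rank is $(\cdot)^{H}$, but it maps right products to right products, $(\dot{\mathbf{A}}\cdot_{R}\dot{\mathbf{B}})^{H}=\dot{\mathbf{B}}^{H}\cdot_{R}\dot{\mathbf{A}}^{H}$, so it never produces the left product needed to invoke the bound of \cite{chen2020low}; conversely, the two involutions that do turn $\cdot_{R}$ into $\cdot_{L}$ are precisely the ones that fail to preserve rank. The inequality therefore has to be proved directly for $\cdot_{R}$, which is what the paper does: it introduces the right row null space $\mathcal{RRN}(\dot{\mathbf{B}})=\{\dot{\mathbf{x}}:\dot{\mathbf{B}}\cdot_{R}\dot{\mathbf{x}}=\mathbf{0}\}$ and the right column null space $\mathcal{RCN}(\dot{\mathbf{A}})$, observes (using the associativity of $\cdot_{R}$) that $\mathcal{RRN}(\dot{\mathbf{B}})\subseteq\mathcal{RRN}(\dot{\mathbf{A}}\cdot_{R}\dot{\mathbf{B}})$ and $\mathcal{RCN}(\dot{\mathbf{A}})\subseteq\mathcal{RCN}(\dot{\mathbf{A}}\cdot_{R}\dot{\mathbf{B}})$, and concludes by comparing dimensions. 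Rewriting your argument along those lines, never leaving the $\cdot_{R}$ setting, is the way to repair it.
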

\begin{proof}
	Denote the right row null spaces of $\dot{\mathbf{B}}$ and $\dot{\mathbf{A}}\cdot_{R}\dot{\mathbf{B}}$ as $\mathcal{RRN}(\dot{\mathbf{B}})=\{\dot{\mathbf{x}}\in\mathbb{H}^{P}:\dot{\mathbf{B}}\cdot_{R}\dot{\mathbf{x}}=\mathbf{0}\}$ and $\mathcal{RRN}(\dot{\mathbf{A}}\cdot_{R}\dot{\mathbf{B}})=\{\dot{\mathbf{x}}\in\mathbb{H}^{P}:(\dot{\mathbf{A}}\cdot_{R}\dot{\mathbf{B}})\cdot_{R}\dot{\mathbf{x}}=\mathbf{0}\}$ \cite{schulz2013using}. One can easily find that
	$\mathcal{RRN}(\dot{\mathbf{B}})\subseteq\mathcal{RRN}(\dot{\mathbf{A}}\cdot_{R}\dot{\mathbf{B}})$. Thus, $\dim\mathcal{RRN}(\dot{\mathbf{B}})\leq\dim\mathcal{RRN}(\dot{\mathbf{A}}\cdot_{R}\dot{\mathbf{B}})$ and ${\rm{rank}}(\dot{\mathbf{A}}\cdot_{R}\dot{\mathbf{B}})\leq{\rm{rank}}(\dot{\mathbf{B}})$.
	Similarly, Denote the right column null spaces of $\dot{\mathbf{A}}$ and $\dot{\mathbf{A}}\cdot_{R}\dot{\mathbf{B}}$ as $\mathcal{RCN}(\dot{\mathbf{A}})=\{\dot{\mathbf{x}}\in\mathbb{H}^{M}:\dot{\mathbf{x}}^{T}\cdot_{R}\dot{\mathbf{A}}=\mathbf{0}^{T}\}$ and $\mathcal{RCN}(\dot{\mathbf{A}}\cdot_{R}\dot{\mathbf{B}})=\{\dot{\mathbf{x}}\in\mathbb{H}^{M}:\dot{\mathbf{x}}^{T}\cdot_{R}(\dot{\mathbf{A}}\cdot_{R}\dot{\mathbf{B}})=\mathbf{0}^{T}\}$ \cite{schulz2013using}. One can also find that
	$\mathcal{RCN}(\dot{\mathbf{A}})\subseteq\mathcal{RCN}(\dot{\mathbf{A}}\cdot_{R}\dot{\mathbf{B}})$. Thus, $\dim\mathcal{RCN}(\dot{\mathbf{A}})\leq\dim\mathcal{RCN}(\dot{\mathbf{A}}\cdot_{R}\dot{\mathbf{B}})$ and ${\rm{rank}}(\dot{\mathbf{A}}\cdot_{R}\dot{\mathbf{B}})\leq{\rm{rank}}(\dot{\mathbf{A}})$. In all, ${\rm{rank}}(\dot{\mathbf{A}}\cdot_{R}\dot{\mathbf{B}})\leq\min({\rm{rank}}(\dot{\mathbf{A}}),{\rm{rank}}(\dot{\mathbf{B}}))$.
\end{proof}

\section{Quaternion tensor left ring decomposition}
\label{sec:main_1}
In this section, we first define the QTLR decomposition. Following the definition, we present an important property of the QTLR decomposition, and finally propose an algorithm for learning the QTLR format.

\subsection{The definition of QTLR decomposition}
\begin{definition}
	(\textbf{QTLR Decomposition}) Let $\dot{\mathcal{T}}\in \mathbb{H}^{I_{1}\times I_{2}\times \ldots \times I_{N}}$ be an $N$-th order quaternion tensor with $I_{n}$-dimension along the $n$-th mode, then QTLR representation is to decompose it
	into a sequence of third-order quaternion tensors $\dot{\mathcal{Z}}_{n}\in \mathbb{H}^{r_{n}\times I_{n}\times r_{n+1}}$ (which can be also called the $n$-th core of $\dot{\mathcal{T}}$),  $n=1,2,\ldots, N$, which can be represented using an element-wise formulation as\footnote{One can also use the right quaternion matrix multiplication to define the QTRR decomposition, which is $	\dot{\mathcal{T}}(i_{1},i_{2},\ldots,i_{N})={\rm{Tr}}\{\dot{\mathcal{Z}}_{1}(i_{1})\cdot_{R}\dot{\mathcal{Z}}_{2}(i_{2})\cdot_{R}\ldots\cdot_{R}\dot{\mathcal{Z}}_{N}(i_{N})\}$. Due to the analogous analysis process and application effects between QTRR and QTLR, we will exclusively consider QTLR throughout this paper.}
	\begin{equation}\label{qtr1}
		\dot{\mathcal{T}}(i_{1},i_{2},\ldots,i_{N})={\rm{Tr}}\{\dot{\mathcal{Z}}_{1}(i_{1})\cdot_{L}\dot{\mathcal{Z}}_{2}(i_{2})\cdot_{L}\ldots\cdot_{L}\dot{\mathcal{Z}}_{N}(i_{N})\},
	\end{equation}
where $\dot{\mathcal{T}}(i_{1},i_{2},\ldots,i_{N})$ denotes the $(i_{1},i_{2},\ldots,i_{N})$-th element of $\dot{\mathcal{T}}$, $\dot{\mathcal{Z}}_{n}(i_{n})=\dot{\mathcal{Z}}_{n}(:,i_{n},:)\in\mathbb{H}^{r_{n}\times r_{n+1}}$  denotes the $i_{n}$-th lateral slice quaternion matrix of the third-order quaternion tensor $\dot{\mathcal{Z}}_{n}$, the last third-order quaternion tensor $\dot{\mathcal{Z}}_{N}$ is of size $r_{N}\times I_{N}\times r_{1}$, i.e., $r_{N+1} = r_{1}$, which ensures the product of these quaternion matrices is a square quaternion matrix. In addition, the vector $\mathbf{r}=[r_{1},r_{2},\ldots,r_{N}]$ is defined as the QTLR-rank of the quaternion tensor $\dot{\mathcal{T}}$.
\end{definition}

Note that formula (\ref{qtr1}) can also be expressed in index form as follows:
\begin{equation}\label{qtr2}
	\begin{split}
	\dot{\mathcal{T}}(i_{1},i_{2},\ldots,i_{N})=&\sum_{\alpha_{1}=1}^{r_{1}}\cdots\sum_{\alpha_{N}=1}^{r_{N}}\dot{\mathcal{Z}}_{1}(\alpha_{1},i_{1},\alpha_{2})\cdot_{L}\\
	&{Z}_{2}(\alpha_{2},i_{2},\alpha_{3})\cdot_{L}\ldots\cdot_{L}{Z}_{N}(\alpha_{N},i_{N},\alpha_{N+1}),
    \end{split}
\end{equation}
where $\alpha_{N+1}=\alpha_{1}$. Thus, one can easily find that quaternion tensor train (QTT) decomposition \cite{miao2022quaternion} is a special case of the defined QTLR decomposition when $r_{1}=1$.

For an efficient representation of QTLR decomposition, we introduce two quaternion tensor multiplications for third-order quaternion tensors, namely the quaternion tensor left connection multiplication and the quaternion tensor right connection multiplication.
\begin{definition} (\textbf{Quaternion Tensor Left and Right Connection Multiplications})
Let $\dot{\mathcal{Z}}_{n}\in\mathbb{H}^{r_{n}\times I_{n}\times r_{n+1}}$, $n=1,2,\ldots,N$, be $N$ third-order quaternion tensors, the quaternion tensor left and right connection multiplications between $\dot{\mathcal{Z}}_{n}$ and $\dot{\mathcal{Z}}_{n+1}$ are respectively defined as
\begin{equation}\label{Lcp}
\dot{\mathcal{Z}}_{n}\cdot_{L}\dot{\mathcal{Z}}_{n+1}\in\mathbb{H}^{r_{n}\times I_{n}I_{n+1}\times r_{n+2}}	={\rm{reshape}}(\dot{\mathbf{Z}}_{n}^{L}\cdot_{L}\dot{\mathbf{Z}}_{n+1}^{R},[r_{n},I_{n}I_{n+1},r_{n+2}])
\end{equation}
and
\begin{equation}\label{Rcp}
	\dot{\mathcal{Z}}_{n}\cdot_{R}\dot{\mathcal{Z}}_{n+1}\in\mathbb{H}^{r_{n}\times I_{n}I_{n+1}\times r_{n+2}}	={\rm{reshape}}(\dot{\mathbf{Z}}_{n}^{L}\cdot_{R}\dot{\mathbf{Z}}_{n+1}^{R},[r_{n},I_{n}I_{n+1},r_{n+2}]),
\end{equation}
where $\dot{\mathbf{Z}}_{n}^{L}\in \mathbb{H}^{r_{n}I_{n}\times I_{n+1}}=(\dot{\mathbf{Z}}_{n})_{\langle2\rangle}$ and $\dot{\mathbf{Z}}_{n+1}^{R}\in \mathbb{H}^{r_{n+1}\times I_{n+1}I_{n+2}}=(\dot{\mathbf{Z}}_{n+1})_{\langle1\rangle}$.
\end{definition}
Then, following the definition (\ref{Lcp}), the QTLR decomposition (\ref{qtr1}) can be represented as
\begin{equation}\label{key}
\dot{\mathcal{T}}=f(\dot{\mathcal{Z}})=f(\dot{\mathcal{Z}}_{1}\cdot_{L}\dot{\mathcal{Z}}_{2}\cdot_{L}\ldots\cdot_{L}\dot{\mathcal{Z}}_{N}),	
\end{equation}
where function $f$ is a trace operation on $\dot{\mathcal{Z}}(:,k,:)$, $k=1,2,\ldots,\prod_{i=1}^{N}I_{i}$, followed by a reshaping operation from vector of the length $\prod_{i=1}^{N}I_{i}$ to quaternion tensor of the size $I_{1}\times I_{2}\times \ldots \times I_{N}$.
\begin{definition}
	(\textbf{Quaternion Tensor Permutation}) For any $N$-th order quaternion tensor $\dot{\mathcal{T}}\in \mathbb{H}^{I_{1}\times I_{2}\times \ldots \times I_{N}}$, the $n$-th quaternion tensor permutation is defined as $\dot{\mathcal{T}}^{P_{n}}\in \mathbb{H}^{I_{n}\times\ldots\times I_{N}\times I_{1}\times \ldots \times I_{n-1}}$:
	\begin{equation}\label{qtp}
		\dot{\mathcal{T}}^{P_{n}}(i_{n},\ldots,i_{N},i_{1},\ldots,i_{n-1})=\dot{\mathcal{T}}(i_{1},i_{2},\ldots,i_{N}).
	\end{equation}
\end{definition}

In the following theorem, based on the definition of quaternion tensor permutation and QTLR decomposition, we present the cyclic permutation property of QTLR decomposition.
\begin{theorem}
(\textbf{Cyclic Permutation Property of QTLR Decomposition}) The quaternion tensor permutation of $\dot{\mathcal{T}}$ is equivalent to its cores circularly shifting, as follows:
\begin{equation}\label{cpp}
\dot{\mathcal{T}}^{P_{n}}=f\big((\dot{\mathcal{Z}}_{n}\cdot_{L}\ldots\cdot_{L}\dot{\mathcal{Z}}_{N})\cdot_{R}(\dot{\mathcal{Z}}_{1}\cdot_{L}\ldots\cdot_{L}\dot{\mathcal{Z}}_{n-1})\big),
\end{equation}
with elements
\begin{equation}\label{cppe}
	\begin{split}
&\dot{\mathcal{T}}^{P_{n}}(i_{n},\ldots,i_{N},i_{1},\ldots,i_{n-1})\\
&={\rm{Tr}}\{\big(\dot{\mathcal{Z}}_{n}(i_{n})\cdot_{L}\ldots\cdot_{L}\dot{\mathcal{Z}}_{N}(i_{N})\big)\cdot_{R}\big(\dot{\mathcal{Z}}_{1}(i_{1})\cdot_{L}\ldots\cdot_{L}\dot{\mathcal{Z}}_{n-1}(i_{n-1})\big)\}.	
	\end{split}
\end{equation}
\end{theorem}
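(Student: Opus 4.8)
The plan is to reduce the tensor identity (\ref{cpp}) to its element-wise form (\ref{cppe}), and then to derive (\ref{cppe}) from the original element-wise QTLR formula (\ref{qtr1}) by establishing a single algebraic identity relating the trace of a left product to the trace of a right product. First I would invoke the definition of the quaternion tensor permutation (\ref{qtp}), which states that $\dot{\mathcal{T}}^{P_{n}}(i_{n},\ldots,i_{N},i_{1},\ldots,i_{n-1})$ is literally equal to $\dot{\mathcal{T}}(i_{1},\ldots,i_{N})$; hence it suffices to show that the right-hand side of (\ref{cppe}) equals the right-hand side of (\ref{qtr1}). Once the element-wise identity is in hand, the tensor-level statement (\ref{cpp}) follows by applying $f$ together with the definitions of the left and right connection multiplications (\ref{Lcp})--(\ref{Rcp}).

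For the element-wise step I would abbreviate the two halves of the product. Using the associativity of $\cdot_{L}$, set $\dot{\mathbf{A}}=\dot{\mathcal{Z}}_{1}(i_{1})\cdot_{L}\ldots\cdot_{L}\dot{\mathcal{Z}}_{n-1}(i_{n-1})\in\mathbb{H}^{r_{1}\times r_{n}}$ and $\dot{\mathbf{B}}=\dot{\mathcal{Z}}_{n}(i_{n})\cdot_{L}\ldots\cdot_{L}\dot{\mathcal{Z}}_{N}(i_{N})\in\mathbb{H}^{r_{n}\times r_{1}}$, where the convention $r_{N+1}=r_{1}$ guarantees dimensional compatibility of both products below. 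With this notation, the right-hand side of (\ref{qtr1}) reads ${\rm{Tr}}\{\dot{\mathbf{A}}\cdot_{L}\dot{\mathbf{B}}\}$, while the right-hand side of (\ref{cppe}) reads ${\rm{Tr}}\{\dot{\mathbf{B}}\cdot_{R}\dot{\mathbf{A}}\}$. The whole theorem therefore collapses to the claim that, for any compatible quaternion matrices,
\begin{equation*}
{\rm{Tr}}\{\dot{\mathbf{A}}\cdot_{L}\dot{\mathbf{B}}\}={\rm{Tr}}\{\dot{\mathbf{B}}\cdot_{R}\dot{\mathbf{A}}\}.
\end{equation*}

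I would prove this identity by expanding both traces through the definitions in (\ref{qmp}). On the left, ${\rm{Tr}}\{\dot{\mathbf{A}}\cdot_{L}\dot{\mathbf{B}}\}=\sum_{s=1}^{r_{1}}\sum_{t=1}^{r_{n}}\dot{a}_{st}\dot{b}_{ts}$; on the right, the right-multiplication definition places the factor coming from $\dot{\mathbf{B}}$ \emph{after} the factor coming from $\dot{\mathbf{A}}$, so that ${\rm{Tr}}\{\dot{\mathbf{B}}\cdot_{R}\dot{\mathbf{A}}\}=\sum_{t=1}^{r_{n}}\sum_{s=1}^{r_{1}}\dot{a}_{st}\dot{b}_{ts}$, which is the same double sum after relabeling. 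The main obstacle, and the conceptual heart of the statement, is precisely that the ordinary cyclic property of the trace ${\rm{Tr}}\{\dot{\mathbf{A}}\cdot_{L}\dot{\mathbf{B}}\}={\rm{Tr}}\{\dot{\mathbf{B}}\cdot_{L}\dot{\mathbf{A}}\}$ \emph{fails} over $\mathbb{H}$, since it would require $\dot{a}_{st}\dot{b}_{ts}=\dot{b}_{ts}\dot{a}_{st}$ and quaternion multiplication is non-commutative. The right multiplication $\cdot_{R}$ is exactly the device that reverses the order of the scalar quaternion products and thereby restores the matching order $\dot{a}_{st}\dot{b}_{ts}$ inside the trace; this is why $\cdot_{R}$, rather than a second $\cdot_{L}$, must appear in (\ref{cpp})--(\ref{cppe}), and carefully tracking this order of the scalar products is the only delicate point of the argument.
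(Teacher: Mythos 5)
Your proposal is correct, and it reaches the conclusion by the same elementary strategy as the paper—an element-wise verification whose whole content is that $\cdot_{R}$ reverses the order of the underlying scalar products—but it is organized around a different key lemma. The paper works with the index form (\ref{qtr2}): it expands $\dot{\mathcal{T}}^{P_{n}}(i_{n},\ldots,i_{N},i_{1},\ldots,i_{n-1})$ as a multi-sum over the rank indices $\alpha_{1},\ldots,\alpha_{N}$ of scalar products, groups each summand into the two scalar blocks corresponding to $\dot{\mathcal{Z}}_{1}\cdots\dot{\mathcal{Z}}_{n-1}$ and $\dot{\mathcal{Z}}_{n}\cdots\dot{\mathcal{Z}}_{N}$, swaps the blocks using the scalar conventions $\dot{a}\cdot_{L}\dot{b}=\dot{a}\dot{b}=\dot{b}\cdot_{R}\dot{a}$, and only in its final equality reassembles the multi-sum into the trace of the $\cdot_{R}$-product of slice matrices. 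You instead group the slice matrices first, via associativity of $\cdot_{L}$, into $\dot{\mathbf{A}}\in\mathbb{H}^{r_{1}\times r_{n}}$ and $\dot{\mathbf{B}}\in\mathbb{H}^{r_{n}\times r_{1}}$ (your dimension bookkeeping, including the role of $r_{N+1}=r_{1}$, is right), and isolate the reassembly as a standalone matrix identity ${\rm{Tr}}\{\dot{\mathbf{A}}\cdot_{L}\dot{\mathbf{B}}\}={\rm{Tr}}\{\dot{\mathbf{B}}\cdot_{R}\dot{\mathbf{A}}\}$, proved by expanding entries through (\ref{qmp}); your expansion of both sides as the double sum $\sum_{s}\sum_{t}\dot{a}_{st}\dot{b}_{ts}$ checks out against the definition of $\cdot_{R}$ (the second operand's entry lands on the left). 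The two computations are ultimately the same sum, but your packaging buys something the paper's does not: it states explicitly, as a reusable lemma, exactly what replaces the cyclic trace property that fails over $\mathbb{H}$, which is the conceptual reason $\cdot_{R}$ must appear in (\ref{cpp})--(\ref{cppe}) at all. The paper's version, by staying at scalar granularity, never needs a matrix-level lemma but leaves that final identification implicit. On the tensor-level statement (\ref{cpp}), you and the paper are at the same level of rigor: the paper's proof establishes only the element-wise identity (\ref{cppe}), and, like you, treats the passage to (\ref{cpp}) as immediate from the definitions of the connection multiplications (\ref{Lcp})--(\ref{Rcp}) and of $f$.
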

\begin{proof}
\begin{equation}\label{cppf}
	\begin{split}
		&\dot{\mathcal{T}}^{P_{n}}(i_{n},\ldots,i_{N},i_{1},\ldots,i_{n-1})\\
		&=\sum_{\alpha_{1}=1}^{r_{1}}\cdots\sum_{\alpha_{N}=1}^{r_{N}}\dot{\mathcal{Z}}_{1}(\alpha_{1},i_{1},\alpha_{2})\cdot_{L}{Z}_{2}(\alpha_{2},i_{2},\alpha_{3})\cdot_{L}\ldots\cdot_{L}{Z}_{N}(\alpha_{N},i_{N},\alpha_{N+1})\\
		&=\sum_{\alpha_{1}=1}^{r_{1}}\cdots\sum_{\alpha_{N}=1}^{r_{N}}\big(\dot{\mathcal{Z}}_{1}(\alpha_{1},i_{1},\alpha_{2})\cdot_{L}\ldots\cdot_{L}{Z}_{n-1}(\alpha_{n-1},i_{n-1},\alpha_{n})\big)\cdot_{L}\\
		&\qquad\big({Z}_{n}(\alpha_{n},i_{n},\alpha_{n+1})\cdot_{L}\ldots\cdot_{L}{Z}_{N}(\alpha_{N},i_{N},\alpha_{N+1}) \big)\\
		&=\sum_{\alpha_{1}=1}^{r_{1}}\cdots\sum_{\alpha_{N}=1}^{r_{N}}\big({Z}_{n}(\alpha_{n},i_{n},\alpha_{n+1})\cdot_{L}\ldots\cdot_{L}{Z}_{N}(\alpha_{N},i_{N},\alpha_{N+1})\big)\cdot_{R}\\
		&\qquad\big(\dot{\mathcal{Z}}_{1}(\alpha_{1},i_{1},\alpha_{2})\cdot_{L}\ldots\cdot_{L}{Z}_{n-1}(\alpha_{n-1},i_{n-1},\alpha_{n})\big)\\
		&={\rm{Tr}}\{\big(\dot{\mathcal{Z}}_{n}(i_{n})\cdot_{L}\ldots\cdot_{L}\dot{\mathcal{Z}}_{N}(i_{N})\big)\cdot_{R}\big(\dot{\mathcal{Z}}_{1}(i_{1})\cdot_{L}\ldots\cdot_{L}\dot{\mathcal{Z}}_{n-1}(i_{n-1})\big)\},
	\end{split}
\end{equation}
where the first equality holds due to (\ref{qtp}) and (\ref{qtr2}), the second and third equalities hold directly as a result of the definitions of left and right multiplications between quaternion scalars.
\end{proof}

Note that,  when quaternion tensors degenerate into real tensors, the cyclic permutation property will degenerate into its real counterpart as presented in \cite{zhao2016tensor}. Although, due to the non-commutativity of quaternion multiplication, there are significant differences in the cyclic permutation property between our defined QTLR decomposition and the TR decomposition in \cite{zhao2016tensor}, they exhibit a similar form, which is why we refer to the decomposition of (\ref{qtr1}) as quaternion tensor left `ring'.

In the following, we develop an algorithm to learn the
QTLR format.
\subsection{QTLR-QSVD algorithm}
Considering that exact quaternion tensor decompositions often demand extensive computational resources and storage, our focus shifts towards low-rank quaternion tensor approximation within the QTLR format.
Inspired by the TR-SVD algorithm for TR decomposition in \cite{zhao2016tensor}, we propose QTLR-QSVD algorithm for learning the QTLR format in this section. Before deriving the QTLR-QSVD algorithm, we first present a required definition and a theorem.
\begin{definition}
	(\textbf{Quaternion Subchain Tensors}) Four quaternion subchain tensors are defined and denoted by
	\begin{equation}\label{qsts}
		\begin{split}
	\dot{\mathcal{Z}}^{<k}\in \mathbb{H}^{r_{1}\times \prod_{n=1}^{k-1}I_{n}\times r_{k}}&=\dot{\mathcal{Z}}_{1}\cdot_{L}\dot{\mathcal{Z}}_{2}\cdot_{L}\ldots\cdot_{L}\dot{\mathcal{Z}}_{k-1},\\
		\dot{\mathcal{Z}}^{\leq k}\in \mathbb{H}^{r_{1}\times \prod_{n=1}^{k}I_{n}\times r_{k+1}}&=\dot{\mathcal{Z}}_{1}\cdot_{L}\dot{\mathcal{Z}}_{2}\cdot_{L}\ldots\cdot_{L}\dot{\mathcal{Z}}_{k},\\	
	\dot{\mathcal{Z}}^{>k}\in \mathbb{H}^{r_{k+1}\times \prod_{n=k+1}^{N}I_{n}\times r_{1}}&=\dot{\mathcal{Z}}_{k+1}\cdot_{L}\dot{\mathcal{Z}}_{k+2}\cdot_{L}\ldots\cdot_{L}\dot{\mathcal{Z}}_{N},\\
	\dot{\mathcal{Z}}^{\geq k}\in \mathbb{H}^{r_{k}\times \prod_{n=k}^{N}I_{n}\times r_{1}}&=\dot{\mathcal{Z}}_{k}\cdot_{L}\dot{\mathcal{Z}}_{k+1}\cdot_{L}\ldots\cdot_{L}\dot{\mathcal{Z}}_{N}.
       \end{split}
	\end{equation}
\end{definition}
Note that the lateral slice matrices of $\dot{\mathcal{Z}}^{<k}$ are $\dot{\mathcal{Z}}^{<k}(:,t,:)=\prod_{n=1}^{k-1}\dot{\mathcal{Z}}_{n}(i_{n})$, where $t=	\overline{i_{1}i_{2}\ldots i_{k-1}}$. Similar
results can be obtained for $\dot{\mathcal{Z}}^{\leq k}$, $\dot{\mathcal{Z}}^{>k}$, and $\dot{\mathcal{Z}}^{\geq k}$.
\begin{theorem}\label{zyxz}
	 Assume $\dot{\mathcal{T}}$ can be represented by a QTLR decomposition. Then,
	 \begin{equation*}
	 \dot{\mathbf{T}}_{\langle k\rangle}=\dot{\mathbf{Z}}^{\leq k}_{(2)}\cdot_{L}(\dot{\mathbf{Z}}^{>k}_{[2]})^{T}.	
	 \end{equation*}
\end{theorem}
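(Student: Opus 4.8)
The plan is to verify the claimed factorization entrywise, comparing the $(\overline{i_{1}\ldots i_{k}},\overline{i_{k+1}\ldots i_{N}})$-th entry of both sides and showing each reduces to $\dot{\mathcal{T}}(i_{1},\ldots,i_{N})$. On the left, the definition of the k-unfolding gives directly $\dot{\mathbf{T}}_{\langle k\rangle}(\overline{i_{1}\ldots i_{k}},\overline{i_{k+1}\ldots i_{N}})=\dot{\mathcal{T}}(i_{1},\ldots,i_{N})$, which by the QTLR definition (\ref{qtr1}) equals ${\rm{Tr}}\{\dot{\mathcal{Z}}_{1}(i_{1})\cdot_{L}\ldots\cdot_{L}\dot{\mathcal{Z}}_{N}(i_{N})\}$. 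It therefore suffices to show that the matrix product on the right produces exactly the same trace.

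First I would introduce the abbreviations $\dot{\mathbf{P}}=\dot{\mathcal{Z}}_{1}(i_{1})\cdot_{L}\ldots\cdot_{L}\dot{\mathcal{Z}}_{k}(i_{k})\in\mathbb{H}^{r_{1}\times r_{k+1}}$ and $\dot{\mathbf{Q}}=\dot{\mathcal{Z}}_{k+1}(i_{k+1})\cdot_{L}\ldots\cdot_{L}\dot{\mathcal{Z}}_{N}(i_{N})\in\mathbb{H}^{r_{k+1}\times r_{1}}$, which, by the lateral-slice description of the subchain tensors in (\ref{qsts}), are precisely $\dot{\mathcal{Z}}^{\leq k}(:,\overline{i_{1}\ldots i_{k}},:)$ and $\dot{\mathcal{Z}}^{>k}(:,\overline{i_{k+1}\ldots i_{N}},:)$. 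Using the classical mode-2 unfolding, the row $\overline{i_{1}\ldots i_{k}}$ of $\dot{\mathbf{Z}}^{\leq k}_{(2)}$ carries the entries $\dot{\mathbf{P}}_{\alpha_{1}\alpha_{k+1}}$ indexed by the column coordinate $\overline{\alpha_{1}\alpha_{k+1}}$ (the two non-mode-2 indices of $\dot{\mathcal{Z}}^{\leq k}$ in their natural order $\alpha_{1}$ then $\alpha_{k+1}$). Using the mode-2 unfolding, the row $\overline{i_{k+1}\ldots i_{N}}$ of $\dot{\mathbf{Z}}^{>k}_{[2]}$ carries $\dot{\mathbf{Q}}_{\alpha_{k+1}\alpha_{1}}$ indexed by the column coordinate $\overline{\alpha_{1}\alpha_{k+1}}$ (here the circular order places the mode-3 index $\alpha_{1}$ before the mode-1 index $\alpha_{k+1}$), so after transposition the row of $(\dot{\mathbf{Z}}^{>k}_{[2]})^{T}$ indexed by $\overline{\alpha_{1}\alpha_{k+1}}$ carries $\dot{\mathbf{Q}}_{\alpha_{k+1}\alpha_{1}}$.

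The crucial observation—and the exact point where the choice of unfolding matters—is that the two composite coordinates produced above coincide as the same multi-index $\overline{\alpha_{1}\alpha_{k+1}}$, so the left matrix product contracts them consistently. Thus $\big(\dot{\mathbf{Z}}^{\leq k}_{(2)}\cdot_{L}(\dot{\mathbf{Z}}^{>k}_{[2]})^{T}\big)(\overline{i_{1}\ldots i_{k}},\overline{i_{k+1}\ldots i_{N}})=\sum_{\alpha_{1}=1}^{r_{1}}\sum_{\alpha_{k+1}=1}^{r_{k+1}}\dot{\mathbf{P}}_{\alpha_{1}\alpha_{k+1}}\dot{\mathbf{Q}}_{\alpha_{k+1}\alpha_{1}}$, where the scalar products appear in the order dictated by $\cdot_{L}$. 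Recognizing this double sum as $\sum_{\alpha_{1}}(\dot{\mathbf{P}}\cdot_{L}\dot{\mathbf{Q}})_{\alpha_{1}\alpha_{1}}={\rm{Tr}}\{\dot{\mathbf{P}}\cdot_{L}\dot{\mathbf{Q}}\}={\rm{Tr}}\{\dot{\mathcal{Z}}_{1}(i_{1})\cdot_{L}\ldots\cdot_{L}\dot{\mathcal{Z}}_{N}(i_{N})\}$ (invoking associativity of $\cdot_{L}$) matches the left-hand entry and completes the proof.

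The main obstacle I anticipate is purely the bookkeeping of composite indices: one must confirm that the classical mode-2 unfolding of $\dot{\mathcal{Z}}^{\leq k}$ and the mode-2 unfolding of $\dot{\mathcal{Z}}^{>k}$ enumerate the $(\alpha_{1},\alpha_{k+1})$ pairs in exactly the same order, so that the inner summation of the matrix product runs over matching coordinates. This alignment is precisely why the statement pairs $(\cdot)_{(2)}$ with $(\cdot)_{[2]}^{T}$ rather than adopting a single unfolding convention. Since all multiplications are left multiplications and every scalar product retains its quaternion order throughout, the non-commutativity of $\mathbb{H}$ introduces no additional difficulty here.
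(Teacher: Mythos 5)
Your proposal is correct and follows essentially the same route as the paper's proof: both verify the identity entrywise by splitting the trace ${\rm{Tr}}\{\dot{\mathcal{Z}}_{1}(i_{1})\cdot_{L}\ldots\cdot_{L}\dot{\mathcal{Z}}_{N}(i_{N})\}$ into the two subchain lateral slices $\dot{\mathcal{Z}}^{\leq k}(:,t_1,:)$ and $\dot{\mathcal{Z}}^{>k}(:,t_2,:)$ and then identifying the resulting contraction over $(\alpha_{1},\alpha_{k+1})$ with the row-by-column product of $\dot{\mathbf{Z}}^{\leq k}_{(2)}$ and $(\dot{\mathbf{Z}}^{>k}_{[2]})^{T}$. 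The only difference is presentational: the paper encodes the index alignment via MATLAB-style reshapes of the two slices, while you carry out the same multi-index bookkeeping explicitly.
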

\begin{proof}
	Based on the definition of k-unfolding of quaternion tensor $\dot{\mathcal{T}}$ , we can express the QTLR decomposition in the following form:
	\begin{equation}\label{kuf}
		\begin{split}
			\dot{\mathbf{T}}_{\langle k\rangle}(t_1,t_2)&={\rm{Tr}}\{\dot{\mathcal{Z}}_{1}(i_{1})\cdot_{L}\dot{\mathcal{Z}}_{2}(i_{2})\cdot_{L}\ldots\cdot_{L}\dot{\mathcal{Z}}_{N}(i_{N})\}\\
			&={\rm{Tr}}\left\{\prod_{n=1}^{k}\dot{\mathcal{Z}}_{n}(i_{n})\prod_{n=k+1}^{N}\dot{\mathcal{Z}}_{n}(i_{n})\right\}\\
			&={\rm{Tr}}\left\{\dot{\mathcal{Z}}^{\leq k}(:,t_1,:) \dot{\mathcal{Z}}^{>k}(:,t_2,:)\right\}\\
			&={\rm{reshape}}(\dot{\mathcal{Z}}^{\leq k}(:,t_1,:),[1,r_{1}r_{k+1}])\cdot_{L}\\
			&\qquad{\rm{reshape}}((\dot{\mathcal{Z}}^{>k}(:,t_2,:))^{T},[r_{1}r_{k+1},1])\\
			&=\sum_{p=1}^{r_{1}r_{k+1}}\dot{\mathbf{Z}}^{\leq k}_{(2)}(t_1,p)\cdot_{L}((\dot{\mathbf{Z}}^{>k}_{[2]})^{T})(p,t_2),
		\end{split}
	\end{equation}
where $t_1=\overline{i_{1}i_{2}\ldots i_{k}}$ and $t_2=\overline{i_{k+1}i_{k+2}\ldots i_{N}}$. Thus, we have $\dot{\mathbf{T}}_{\langle k\rangle}=\dot{\mathbf{Z}}^{\leq k}_{(2)}\cdot_{L}(\dot{\mathbf{Z}}^{>k}_{[2]})^{T}$.
\end{proof}

Now, we present an algorithm that utilizes $N$ sequential quaternion singular value decompositions (QSVDs) \cite{zhang1997quaternions} for computing the QTLR decomposition. From theorem \ref{zyxz}, we have $\dot{\mathbf{T}}_{\langle 1\rangle}=\dot{\mathbf{Z}}^{\leq 1}_{(2)}\cdot_{L}(\dot{\mathbf{Z}}^{>1}_{[2]})^{T}$, then we truncate the QSVD\footnote{The $\delta$-truncated QSVD of a quaternion matrix $\dot{\mathbf{T}}$ means that we use a truncation threshold $\delta$ to truncate the singular values of $\dot{\mathbf{T}}$, retaining only those that are greater than or equal to $\delta^{2}$. The symbol ${\rm{rank}}_{\delta}(\dot{\mathbf{T}})$ represents the number of singular values in $\dot{\mathbf{T}}$ that are greater than or equal to $\delta^{2}$.} of $\dot{\mathbf{T}}_{\langle 1\rangle}$ to obtain its low-rank approximation, \emph{i.e.}, such that
\begin{equation}\label{alg1}
	\dot{\mathbf{T}}_{\langle 1\rangle}=\dot{\mathbf{U}}_{1}\Sigma_{1}\dot{\mathbf{V}}_{1}^{H}+\dot{\mathbf{\varepsilon}}_{1}.
\end{equation}
Let $\dot{\mathbf{Z}}^{\leq 1}_{(2)}=\dot{\mathbf{U}}_{1}$ and $(\dot{\mathbf{Z}}^{>1}_{[2]})^{T}=\Sigma_{1}\dot{\mathbf{V}}_{1}^{H}$, then the first core $\dot{\mathcal{Z}}_{1}$ and quaternion subchain tensor $\dot{\mathcal{Z}}^{>1}$ can be obtained by the proper reshaping and permutation of $\dot{\mathbf{U}}_{1}$ and $\Sigma_{1}\dot{\mathbf{V}}_{1}^{H}$, respectively. Afterwards, let $\dot{\mathbf{Z}}^{>1}={\rm{reshape}}(\dot{\mathcal{Z}}^{>1},[r_{2}I_{2},\prod_{n=3}^{N}I_{n}r_{1}])$, then truncate the QSVD of $\dot{\mathbf{Z}}^{>1}$ to obtain its low-rank approximation, \emph{i.e.}, such that
\begin{equation}\label{alg2}
	\dot{\mathbf{Z}}^{>1}=\dot{\mathbf{U}}_{2}\Sigma_{2}\dot{\mathbf{V}}_{2}^{H}+\dot{\mathbf{\varepsilon}}_{2}.
\end{equation}
Then, the second core  $\dot{\mathcal{Z}}_{2}$ and quaternion subchain tensor $\dot{\mathcal{Z}}^{>2}$ can be obtained by the proper reshaping of $\dot{\mathbf{U}}_{2}$ and $\Sigma_{2}\dot{\mathbf{V}}_{2}^{H}$, respectively. This procedure can be carried out in a sequential manner to acquire all $N$ cores $\dot{\mathcal{Z}}_{n}, n=1,2,\ldots,N$. Similar to the TR-SVD algorithm \cite{zhao2016tensor}, for QTLR-QSVD algorithm, we set the truncation threshold as  
\begin{equation}\label{deltan}
	\delta_{n}=	\left\{
	\begin{array}{lc}
		\sqrt{2}\epsilon_{p}\|\dot{\mathcal{T}}\|_{F}/\sqrt{N}, \qquad &n=1,  \\
		\epsilon_{p}\|\dot{\mathcal{T}}\|_{F}/\sqrt{N},  \qquad &n>1,
	\end{array}
	\right.
\end{equation}
where $\epsilon_{p}$ is a given prescribed relative error.
The detailed procedure of the QTLR-QSVD algorithm is listed in Algorithm \ref{alg:qtrqsvd}.
\begin{algorithm}
	\caption{QTLR-QSVD}
	\label{alg:qtrqsvd}
	\begin{algorithmic}
		\REQUIRE An $N$-th order quaternion tensor $\dot{\mathcal{T}}\in \mathbb{H}^{I_{1}\times I_{2}\times \ldots \times I_{N}}$ and the
		prescribed relative error $\epsilon_{p}$.
		\STATE Step1: Compute truncation threshold $\delta_{n}$ for $n=1$ and $n>1$ via (\ref{deltan}).
		\STATE Step2: Choose the first mode as the start point and obtain $1$-unfolding quaternion matrix  $\dot{\mathbf{T}}_{\langle 1\rangle}$.
		\STATE Step3: Low-rank approximation by applying $\delta_{1}$-truncated QSVD: $ \dot{\mathbf{T}}_{\langle 1\rangle}=\dot{\mathbf{U}}_{1}\Sigma_{1}\dot{\mathbf{V}}_{1}^{H}+\dot{\mathbf{\varepsilon}}_{1}$.
		\STATE Step4: Split ranks $r_{1}$ and $r_{2}$ by:
		$\mathop{{\rm{min}}}\limits_{r_{1},r_{2}}\ |r_{1}-r_{2}|,\ \text{\emph{s.t.}}\ r_{1}r_{2}={\rm{rank}}_{\delta_{1}}(\dot{\mathbf{T}}_{\langle 1\rangle})$.
		\STATE Step5: Obtain $\dot{\mathcal{Z}}_{1}$ via $\dot{\mathcal{Z}}_{1}={\rm{permute}}({\rm{reshape}}(\dot{\mathbf{U}}_{1},[I_{1},r_{1},r_{2}]),[2,1,3])$.
		\STATE Step6: Obtain $\dot{\mathcal{Z}}^{>1}$ via $\dot{\mathcal{Z}}^{>1}={\rm{permute}}({\rm{reshape}}(\Sigma_{1}\dot{\mathbf{V}}_{1}^{H},[r_{1},r_{2},\prod_{n=2}^{N}I_{n}]),[2,3,1])$.
		\STATE Step7: Perform the following iterative procedure:
		\FOR {$n=2$ to $N-1$}
		\STATE $\dot{\mathbf{Z}}^{>n-1}={\rm{reshape}}(\dot{\mathcal{Z}}^{>n-1},[r_{n}I_{n},\prod_{p=n+1}^{N}I_{p}r_{1}])$.
		\STATE Compute $\delta_{n}$-truncated QSVD: $\dot{\mathbf{Z}}^{>n-1}=\dot{\mathbf{U}}_{n}\Sigma_{n}\dot{\mathbf{V}}_{n}^{H}+\dot{\mathbf{\varepsilon}}_{n}$.
		\STATE  $r_{n+1}={\rm{rank}}_{\delta_{n}}(\dot{\mathbf{Z}}^{>n-1})$.
		\STATE $\dot{\mathcal{Z}}_{n}={\rm{reshape}}(\dot{\mathbf{U}}_{n},[r_{n},I_{n},r_{n+1}])$.
    	\STATE $\dot{\mathcal{Z}}^{>n}={\rm{reshape}}(\Sigma_{n}\dot{\mathbf{V}}_{n}^{H},[r_{n+1},\prod_{p=n+1}^{N}I_{p},r_{1}])$.
		\ENDFOR
		\ENSURE  $N$ cores $\dot{\mathcal{Z}}_{n}, n=1,2,\ldots,N$ of QTLR decomposition.
	\end{algorithmic}
\end{algorithm}

The QTLR-QSVD algorithm possesses inherent computational efficiency as a result of its non-recursive nature, enabling it to achieve a high degree of approximation for any given quaternion tensor. We compared the reconstruction (approximation) performance of QTLR-QSVD and TR-SVD on color images in Figure \ref{QTR-QSVD_TR-SVD}. 
\begin{figure}[htbp]
	\centering
	\subfigure[]{
		\includegraphics[width=12.5cm,height=5cm]{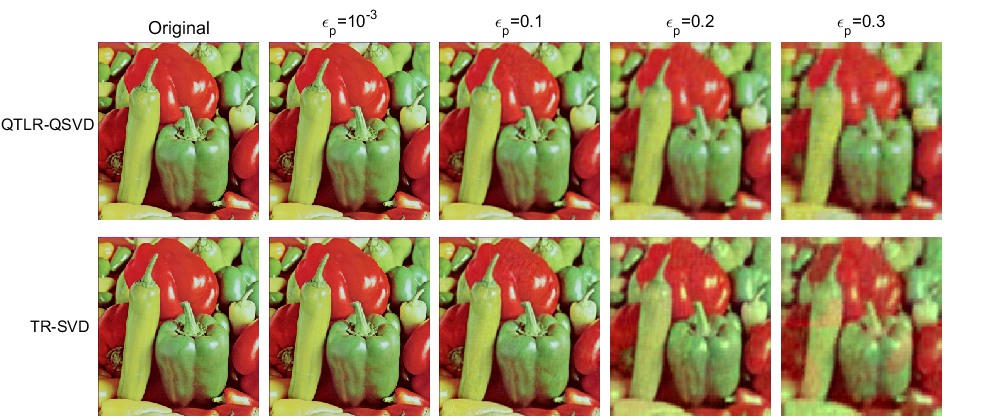}
	}
	\subfigure[]{
		\includegraphics[width=12.8cm,height=5cm]{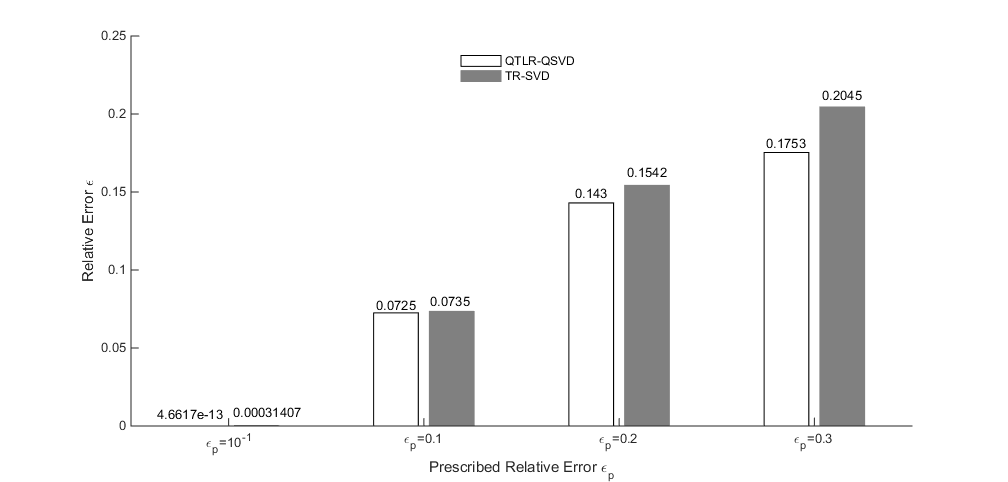}
	}
	\caption{The reconstruction of one color image `peppers' by using QTLR-QSVD and TR-SVD. The color image is tensorized to $9$th-order quaternion tensor $4\times4\times\ldots\times4$ ($10$th-order tensor $4\times4\times\ldots\times4\times3$ for TR-SVD) by the OKA procedure (see Section \ref{sec:main_3}). (a) The reconstruction results are visually displayed  for different prescribed relative errors. (b) The reconstruction relative errors are shown for different prescribed relative errors.
	}
	\label{QTR-QSVD_TR-SVD}
\end{figure}
Based on the comparison results, we can conclude that the incorporation of quaternions enables QTLR-QSVD to achieve better performance in the reconstruction of color images compared to TR-SVD. Similar results can be obtained for other color images as well.

Note that the non-commutativity of quaternion multiplication prevents the learning algorithm for the QTLR model from achieving the same level of richness as the algorithm for the TR model, which is essentially a degenerate version of the QTLR model in the real number domain. For instance, the TR-ALS series algorithms used for learning the TR model in \cite{zhao2016tensor} cannot be applied to the learning of the QTLR model\footnote{In fact, one can verify that based on our definition of QTLR and its satisfied cyclic permutation property, the inability of the TR-ALS algorithms \cite{zhao2016tensor} to be applied to learning the QTLR format primarily stems from property (\ref{hunhebudeng}).}.

As mentioned in the introduction section, the introduction of the QTLR model primarily aims to combine the structural advantages of the TR model with the benefits of quaternion representation for color pixels. It is anticipated that methods based on the QTLR decomposition will yield improved results in tasks related to color image processing. In this paper, we will use the example of color image inpainting based on an LRQTC model as an application case to validate this assertion.

\section{Low-rank quaternion tensor completion}\label{sec:main_2}
In this section, we will propose an LRQTC model and its corresponding optimization algorithm based on the previously defined QTLR decomposition and QTLR-rank.

\subsection{LRQTC model based on QTLR weighted nuclear norm minimization}
In order to introduce the QTLR weighted nuclear norm formulation, we first define the circular unfolding of a quaternion tensor and then theoretically establish its connection to the QTLR-rank. This method does not rely on a pre-specified QTLR-rank, thus transforming the problem of approximating higher-order quaternion tensors with a low QTLR-rank into a low-rank approximation problem of quaternion matrices.
\begin{definition}
(\textbf{Quaternion Tensor Circular Unfolding}) Let $\dot{\mathcal{T}}\in \mathbb{H}^{I_{1}\times I_{2}\times \ldots \times I_{N}}$ be an $N$-th order quaternion tensor, its circular unfolding is a quaternion matrix, denoted by
$\dot{\mathbf{T}}_{\{k,l\}}$, which first permutes $\dot{\mathcal{T}}$ with order $[k,\ldots,N,1\ldots,k-1]$ and then performs matricization along the first $l$ modes, i.e., $l$-unfolding. The indices of $\dot{\mathbf{T}}_{\{k,l\}}(p,q)$ are formulated as
\begin{equation}\label{indices}
	\dot{\mathbf{T}}_{\{k,l\}}(p,q)=\dot{\mathcal{T}}(i_{1},i_{2},\ldots,i_{N}),
\end{equation}
where $p=1+\sum_{s=k}^{k+l-1}(i_{s}-1)\prod_{t=k}^{s-1}I_{t}$ and $ q=1+\sum_{s=k+l}^{k-1}(i_{s}-1)\prod_{t=k+l}^{s-1}I_{t}$. Additionally, we use ${\rm{fold}}_{\{k,l\}} (\dot{\mathbf{T}}_{\{k,l\}})$ to denote the inverse process of quaternion tensor circular unfolding. 
\end{definition}
Note that when $l=1$, the quaternion tensor circular unfolding is reduced to the quaternion tensor mode-$k$ unfolding of $\dot{\mathcal{T}}$, \emph{i.e.}, $\dot{\mathbf{T}}_{\{k,1\}}=\dot{\mathbf{T}}_{[k]}$.

\begin{theorem}\label{theor1}
Assume that $\dot{\mathcal{T}}\in \mathbb{H}^{I_{1}\times I_{2}\times \ldots \times I_{N}}$ is an $N$-th order quaternion tensor with QTLR-rank $\mathbf{r}=[r_{1},r_{2},\ldots,r_{N}]$, and then when $l=N-k+1$, for each $\dot{\mathbf{T}}_{\{k,l\}}$, we have
\begin{equation}\label{rankthe}
	{\rm{rank}}(\dot{\mathbf{T}}_{\{k,l\}})\leq r_{k}r_{k+l}.
\end{equation}
\end{theorem}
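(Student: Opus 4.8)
The plan is to collapse the circular unfolding, via the cyclic permutation property (\ref{cpp})--(\ref{cppe}), into a single right quaternion matrix multiplication whose inner dimension is exactly $r_{k}r_{k+l}$, and then to invoke the right-multiplication rank inequality (\ref{ranlr1}). The essential observation is that the prescribed value $l=N-k+1$ is special: it is precisely the number of modes $I_{k},\ldots,I_{N}$ carried by the subchain $\dot{\mathcal{Z}}^{\geq k}$, so that the $l$-unfolding of the permuted tensor $\dot{\mathcal{T}}^{P_{k}}$ separates those modes from the modes $I_{1},\ldots,I_{k-1}$ carried by $\dot{\mathcal{Z}}^{<k}$. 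Since the circular unfolding is by definition $\dot{\mathbf{T}}_{\{k,l\}}=(\dot{\mathcal{T}}^{P_{k}})_{\langle l\rangle}$, with rows indexed by $p=\overline{i_{k}\ldots i_{N}}$ and columns by $q=\overline{i_{1}\ldots i_{k-1}}$, this is exactly the regime where the trace factorizes.

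First I would combine the element-wise cyclic permutation property (\ref{cppe}) with the lateral-slice description of the quaternion subchain tensors, namely $\dot{\mathcal{Z}}^{\geq k}(:,p,:)=\prod_{n=k}^{N}\dot{\mathcal{Z}}_{n}(i_{n})\in\mathbb{H}^{r_{k}\times r_{1}}$ and $\dot{\mathcal{Z}}^{<k}(:,q,:)=\prod_{n=1}^{k-1}\dot{\mathcal{Z}}_{n}(i_{n})\in\mathbb{H}^{r_{1}\times r_{k}}$, to obtain
\begin{equation*}
\dot{\mathbf{T}}_{\{k,l\}}(p,q)={\rm{Tr}}\{\dot{\mathcal{Z}}^{\geq k}(:,p,:)\cdot_{R}\dot{\mathcal{Z}}^{<k}(:,q,:)\}.
\end{equation*}
I would then expand this trace using the definition of $\cdot_{R}$ into a double sum over $m\in[r_{k}]$ and $n\in[r_{1}]$, giving $\dot{\mathbf{T}}_{\{k,l\}}(p,q)=\sum_{m,n}(\dot{\mathcal{Z}}^{<k}(:,q,:))_{nm}(\dot{\mathcal{Z}}^{\geq k}(:,p,:))_{mn}$.

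Next, treating the pair $(m,n)$ as a single index ranging over a set of size $r_{k}r_{1}$, I would recognize this bilinear expression as the $(p,q)$-entry of a right multiplication $\dot{\mathbf{A}}\cdot_{R}\dot{\mathbf{B}}$, with $\dot{\mathbf{A}}\in\mathbb{H}^{(\prod_{n=k}^{N}I_{n})\times r_{k}r_{1}}$ a classical mode-2 unfolding of $\dot{\mathcal{Z}}^{\geq k}$ and $\dot{\mathbf{B}}\in\mathbb{H}^{r_{k}r_{1}\times(\prod_{n=1}^{k-1}I_{n})}$ a suitable reshaping of $\dot{\mathcal{Z}}^{<k}$. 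Since $\dot{\mathbf{A}}$ has only $r_{k}r_{1}$ columns and $\dot{\mathbf{B}}$ only $r_{k}r_{1}$ rows, applying (\ref{ranlr1}) yields
\begin{equation*}
{\rm{rank}}(\dot{\mathbf{T}}_{\{k,l\}})={\rm{rank}}(\dot{\mathbf{A}}\cdot_{R}\dot{\mathbf{B}})\leq\min({\rm{rank}}(\dot{\mathbf{A}}),{\rm{rank}}(\dot{\mathbf{B}}))\leq r_{k}r_{1}=r_{k}r_{k+l},
\end{equation*}
where the final identity uses $k+l=N+1$ and the convention $r_{N+1}=r_{1}$.

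The step I expect to be the main obstacle is matching the scalar ordering produced by the trace of the right connection with the definition of $\cdot_{R}$: because quaternion multiplication is non-commutative, inside each summand the factor coming from $\dot{\mathcal{Z}}^{<k}$ must sit to the left of the factor coming from $\dot{\mathcal{Z}}^{\geq k}$, and it is exactly this ordering that forces the factorization to be a \emph{right} multiplication rather than a left one. This is why the rank bound (\ref{ranlr1}) for $\cdot_{R}$ (and not merely the analogous bound for $\cdot_{L}$ from \cite{chen2020low}) is the tool that closes the argument. Care is needed to verify that the combined index $(m,n)$ genuinely collapses the two summations into a single inner dimension of size $r_{k}r_{1}$, which is what pins the rank bound to $r_{k}r_{k+l}$ rather than to a larger product.
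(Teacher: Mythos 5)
Your proposal is correct and follows essentially the same route as the paper's proof: the paper also regroups the trace cyclically (its matrices $\dot{\mathcal{W}}=\dot{\mathcal{Z}}_{k}\cdot_{L}\ldots\cdot_{L}\dot{\mathcal{Z}}_{k+l-1}$ and $\dot{\mathcal{H}}=\dot{\mathcal{Z}}_{1}\cdot_{L}\ldots\cdot_{L}\dot{\mathcal{Z}}_{k-1}$ are precisely your $\dot{\mathcal{Z}}^{\geq k}$ and $\dot{\mathcal{Z}}^{<k}$), collapses the double sum over $(\alpha_{1},\alpha_{2})$ into a single index of size $r_{k}r_{k+l}$, writes $\dot{\mathbf{T}}_{\{k,l\}}=\dot{\mathbf{W}}_{(2)}\cdot_{R}\dot{\mathbf{H}}_{[2]}^{T}$, and concludes via the right-multiplication rank inequality (\ref{ranlr1}) together with $r_{k+l}=r_{N+1}=r_{1}$. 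Your identification of the scalar ordering (the $\dot{\mathcal{Z}}^{<k}$ factor standing to the left of the $\dot{\mathcal{Z}}^{\geq k}$ factor) as the reason the factorization is a right rather than left multiplication is exactly the point the paper's computation encodes.
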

\begin{proof}
	From (\ref{qtr1}) and (\ref{indices}), $\dot{\mathbf{T}}_{\{k,l\}}(p,q)$ can be represented in the index form, that is 
\begin{equation}\label{rankproof1}
	\dot{\mathbf{T}}_{\{k,l\}}(p,q)=\dot{\mathcal{T}}(i_{1},i_{2},\ldots,i_{N})
	={\rm{Tr}}\{\dot{\mathcal{Z}}_{1}(i_{1})\cdot_{L}\dot{\mathcal{Z}}_{2}(i_{2})\cdot_{L}\ldots\cdot_{L}\dot{\mathcal{Z}}_{N}(i_{N})\}.
\end{equation}
When $l=N-k+1$, (\ref{rankproof1}) can be rewritten as
\begin{equation}\label{proofc1}
	\begin{split}
	\dot{\mathbf{T}}_{\{k,l\}}(p,q)&= {\rm{Tr}}\{\big(\dot{\mathcal{Z}}_{k}(i_{k})\cdot_{L}\ldots\cdot_{L}\dot{\mathcal{Z}}_{N}(i_{N})\big)\cdot_{R}\big(\dot{\mathcal{Z}}_{1}(i_{1})\cdot_{L}\ldots\cdot_{L}\dot{\mathcal{Z}}_{k-1}(i_{k-1})\big)\}\\	
	&={\rm{Tr}}\{\big(\dot{\mathcal{Z}}_{k}(i_{k})\cdot_{L}\ldots\cdot_{L}\dot{\mathcal{Z}}_{k+l-1}(i_{k+l-1})\big)\cdot_{R}\big(\dot{\mathcal{Z}}_{1}(i_{1})\cdot_{L}\ldots\cdot_{L}\dot{\mathcal{Z}}_{k-1}(i_{k-1})\big)\}\\	
	&=	{\rm{Tr}}\{\dot{\mathcal{W}}(:,p,:)\cdot_{R}\dot{\mathcal{H}}(:,q,:)\}\\
	&=\sum_{\alpha_{1}=1}^{r_{k}}\sum_{\alpha_{2}=1}^{r_{l+k}}\dot{\mathcal{W}}(\alpha_{1},p,\alpha_{2})\cdot_{R}\dot{\mathcal{H}}(\alpha_{2},q,\alpha_{1})\\
	&=\sum_{\beta=1}^{r_{k}r_{l+k}}\dot{\mathbf{W}}_{(2)}(p,\beta)\cdot_{R}\dot{\mathbf{H}}_{[2]}^{T}(\beta,q)\\
	&=\sum_{\beta=1}^{r_{k}r_{l+k}}\dot{\mathbf{H}}_{[2]}^{T}(\beta,q)\dot{\mathbf{W}}_{(2)}(p,\beta),
	\end{split}
\end{equation}
where $\dot{\mathcal{W}}\in\mathbb{H}^{r_{k}\times\prod_{n=k}^{N}I_{n}\times r_{l+k}}=\dot{\mathcal{Z}}_{k}\cdot_{L}\ldots\cdot_{L}\dot{\mathcal{Z}}_{k+l-1}$, $\dot{\mathcal{H}}\in\mathbb{H}^{r_{1}\times\prod_{n=1}^{k-1}I_{n}\times r_{k}}=\dot{\mathcal{Z}}_{1}\cdot_{L}\ldots\cdot_{L}\dot{\mathcal{Z}}_{k-1}$. According to (\ref{proofc1}), we can get that $\dot{\mathbf{T}}_{\{k,l\}}=\dot{\mathbf{W}}_{(2)}\cdot_{R}\dot{\mathbf{H}}_{[2]}^{T}$, which combining (\ref{ranlr1}) means ${\rm{rank}}(\dot{\mathbf{T}}_{\{k,l\}})\leq \min({\rm{rank}}(\dot{\mathbf{W}}_{(2)},{\rm{rank}}(\dot{\mathbf{H}}_{[2]}^{T}))\leq\min(r_{k}r_{k+l},\prod_{n=k}^{N}I_{n},\prod_{n=1}^{k-1}I_{n})$. Note that $r_{k+l}=r_{N+1}=r_{1}$.
\end{proof}

\subsubsection{The proposed model}
From Theorem \ref{theor1}, we can observe that for an arbitrary $N$-th order quaternion tensor with QTLR-rank $\mathbf{r}=[r_{1},r_{2},\ldots,r_{N}]$, the rank of each circular unfolding quaternion matrix $\dot{\mathbf{T}}_{\{k,l\}}$ with $l=N-k+1$ is bounded by $r_{k}r_{k+l}$. Hence, the problem of quaternion tensor QTLR-rank minimization can be equivalently transformed into a sequence of quaternion matrix rank minimization subproblems, \emph{i.e.}, to minimize QTLR-rank, a natural
option is to consider the sum of rank of circular unfolding quaternion matrices:
\begin{equation}\label{qtt_rank}
	\mathop{{\rm{min}}}\limits_{\dot{\mathcal{T}}}\ \sum_{k=2}^{N}\alpha_{k}{\rm{rank}}(\dot{\mathbf{T}}_{\{k,l\}}),
\end{equation}
where $\alpha_{k}$ for $k=2,3,\ldots,N$ are positive parameters satisfying $\sum_{k=2}^{N}\alpha_{k}=1$. Note that $k$ starts from $2$, because when $k=1$, there is no permutation for $\dot{\mathcal{T}}$. Nevertheless, the general computational complexity of problem (\ref{qtt_rank}) makes it intractable. To address the solvability of (\ref{qtt_rank}), a convex surrogate, the sum of weighted nuclear norm, has been adopted. The definition of this surrogate is provided as follows.

\begin{definition}
(\textbf{QTLR Weighted Nuclear Norm}) Assume the quaternion tensor $\dot{\mathcal{T}}$ with
QTLR decomposition, its QTLR weighted nuclear norm is defined as
\begin{equation}\label{qtrnn}
	\sum_{k=2}^{N}\alpha_{k}\|\dot{\mathbf{T}}_{\{k,l\}}\|_{w,\ast},
\end{equation}
where $l=N-k+1$.
\end{definition}

Folowing Theorem \ref{theor1}, we constrain $l = N-k+1$ in our defined QTLR weighted nuclear norm, and we perform permutations on $\dot{\mathcal{T}}$  for $k = 2$ to $k = N$. Therefore, these $N-1$ differently sized and permuted quaternion matrices allow for a more comprehensive capture of the low-rank structure of $\dot{\mathcal{T}}$ and the global information of the quaternion data.

Based on the defined QTLR weighted nuclear norm (\ref{qtrnn}), we propose the following LRQTC model:
\begin{equation}
	\label{equmodel1}
	\begin{split}
		&\mathop{{\rm{min}}}\limits_{\dot{\mathcal{T}}}\ \sum_{k=2}^{N}\alpha_{k}\|\dot{\mathbf{T}}_{\{k,l\}}\|_{w,\ast}\\ 
		&\ \text{s.t.}\ P_{\Omega}(\dot{\mathcal{T}})=P_{\Omega}(\dot{\mathcal{X}}),
	\end{split}
\end{equation}
where $\dot{\mathcal{T}}\in \mathbb{H}^{I_{1}\times I_{2}\times \ldots \times I_{N}}$  is a completed output $N$-th order quaternion tensor,  $\dot{\mathcal{X}}\in \mathbb{H}^{I_{1}\times I_{2}\times \ldots \times I_{N}}$ is the observed $N$-th order quaternion tensor, and $P_{\Omega}(\cdot)$ is the projection
operator on $\Omega$ which is the index of observed elements. Specifically,
\begin{equation*}
	\mathcal{P}_{\Omega}(\dot{\mathcal{T}})=\left\{
	\begin{array}{lc}
		\dot{\mathcal{T}}(i_{1},i_{2},\ldots,i_{N}),\qquad &(i_{1},i_{2},\ldots,i_{N})\in \Omega, \\
		0,  &\text{otherwise}.
	\end{array}
	\right.
\end{equation*}

\subsubsection{Numercial scheme to solve the LRQTC model}
To enable the solution of (\ref{equmodel1}), we use the variable-splitting technique and introduce auxiliary quaternion tensors $\{\dot{\mathcal{M}}^{(k)}\}_{k=2}^{N}\in \mathbb{H}^{I_{1}\times I_{2}\times \ldots \times I_{N}}$ in (\ref{equmodel1}). Consequently, (\ref{equmodel1}) is finally transformed into the following solvable model:
\begin{equation}
	\label{equmodel2}
	\begin{split}
		&\mathop{{\rm{min}}}\limits_{\dot{\mathcal{T}},\{\dot{\mathcal{M}}^{(k)}\}}\ \sum_{k=2}^{N}\alpha_{k}\|\dot{\mathbf{M}}^{(k)}_{\{k,l\}}\|_{w,\ast}\\ 
		&\ \text{s.t.}\ \dot{\mathcal{T}}=\dot{\mathcal{M}}^{(k)},\  k=2,3,\ldots, N,\\ &\qquad \quad  P_{\Omega}(\dot{\mathcal{T}})=P_{\Omega}(\dot{\mathcal{X}}).
	\end{split}
\end{equation}
Based on the ADMM framework in the quaternion domain \cite{DBLP:journals/tsp/MiaoK20}, the augmented Lagrangian function of (\ref{equmodel2}) is defined as
\begin{equation}\label{equqlf}
	\begin{split}
		\mathcal{L}_{\mu}(\dot{\mathcal{X}},\{\dot{\mathcal{M}}^{(k)}\}_{k=2}^{N},\{\dot{\mathcal{Y}}^{(k)}\}_{k=2}^{N})
		=&\sum_{k=2}^{N}\alpha_{k}\|\dot{\mathbf{M}}^{(k)}_{\{k,l\}}\|_{w,\ast}
		+\mathfrak{R}(\langle\dot{\mathcal{Y}}^{(k)},\dot{\mathcal{T}}-\dot{\mathcal{M}}^{(k)}\rangle)\\
		&+\frac{\mu_{k}}{2}\|\dot{\mathcal{T}}-\dot{\mathcal{M}}^{(k)}\|_{F}^{2} \\
		\  \text{s.t.}\ P_{\Omega}(\dot{\mathcal{T}})=&P_{\Omega}(\dot{\mathcal{X}}),
	\end{split}	
\end{equation}
where $\dot{\mathcal{Y}}^{(k)}\in \mathbb{H}^{I_{1}\times I_{2}\times \ldots \times I_{N}}$ for $k=2,3,\ldots,N$ are Lagrange Multipliers, $\mu_{k}>0$ for $k=2,3,\ldots,N$ are penalty parameters. Then, we use an iterative scheme to solve the problem (\ref{equqlf}).

\textbf{Update $\dot{\mathcal{M}}^{(k)}$}: To optimize $\dot{\mathcal{M}}^{(k)}$ is equivalent to solve the subproblem:
\begin{equation}\label{equupm1}
	\begin{split}
		\dot{\mathcal{M}}^{(k)}&=\mathop{{\rm{arg\, min}}}\limits_{\dot{\mathcal{M}}^{(k)}}\ \alpha_{k}\|\dot{\mathbf{M}}^{(k)}_{\{k,l\}}\|_{w,\ast}
		+\mathfrak{R}(\langle\dot{\mathcal{Y}}^{(k)},\dot{\mathcal{T}}-\dot{\mathcal{M}}^{(k)}\rangle)+\frac{\mu_{k}}{2}\|\dot{\mathcal{T}}-\dot{\mathcal{M}}^{(k)}\|_{F}^{2} \\
		&=\mathop{{\rm{arg\, min}}}\limits_{\dot{\mathcal{M}}^{(k)}}\ \frac{\alpha_{k}}{\mu_{k}}\|\dot{\mathbf{M}}^{(k)}_{\{k,l\}}\|_{w,\ast}
		+\frac{1}{2}\|\dot{\mathcal{M}}^{(k)}-(\dot{\mathcal{T}}+\frac{\dot{\mathcal{Y}}^{(k)}}{\mu_{k}})\|_{F}^{2}.
	\end{split}
\end{equation}
Denote $\dot{\mathbf{\Gamma}}$= $\dot{\mathcal{T}}+\frac{\dot{\mathcal{Y}}^{(k)}}{\mu_{k}}$ and let $\dot{\mathbf{\Gamma}}=\dot{\mathbf{U}}\mathbf{\Sigma}\dot{\mathbf{V}}^{H}$ be the QSVD of $\dot{\mathbf{\Gamma}}$, where 
\begin{equation*}
	\mathbf{\Sigma}=\left[\begin{array}{cc}
		{\rm{diag}}\big(\sigma_{1}(\dot{\mathbf{\Gamma}}), \ldots, \sigma_{s}(\dot{\mathbf{\Gamma}})\big)\\ 
		\mathbf{0}
	\end{array} \right],
\end{equation*}
and $\sigma_{n}(\dot{\mathbf{\Gamma}})$ is the $n$-th singular value of $\dot{\mathbf{\Gamma}}$, $s$ denotes the number of nonzero singular values of $\dot{\mathbf{\Gamma}}$. From \cite{DBLP:journals/ijon/YuZY19}, the problem (\ref{equupm1}) has the following closed-form solution:
\begin{equation}\label{equupm2}
	\dot{\mathcal{M}}^{(k)}={\rm{fold}}_{\{k,l\}}
	(\dot{\mathbf{U}}\hat{\mathbf{\Sigma}}\dot{\mathbf{V}}^{H}),
\end{equation}
where 
\begin{equation*}
	\hat{\mathbf{\Sigma}}=\left[\begin{array}{cc}
		{\rm{diag}}\big(\sigma_{1}(\dot{\mathbf{M}}^{(k)}_{\{k,l\}}),\ldots, \sigma_{s}(\dot{\mathbf{M}}^{(k)}_{\{k,l\}})\big)\\ 
		\mathbf{0}
	\end{array} \right],
\end{equation*}
and $\sigma_{n}(\dot{\mathbf{M}}^{(k)}_{\{k,l\}})=	\left\{
	\begin{array}{lc}
		0,\qquad &\text{if}\  c_{2}<0\\
		\frac{c_{1}+\sqrt{c_{2}}}{2}, &\text{if}\  c_{2}\geq0
	\end{array}
	\right.
$, with $c_{1}=\sigma_{n}(\dot{\mathbf{\Gamma}})-\epsilon$, $c_{2}=(\sigma_{n}(\dot{\mathbf{\Gamma}})+\epsilon)^{2}-4C$, and $C$ is a compromising constant.

\textbf{Update $\dot{\mathcal{T}}$}:  To optimize $\dot{\mathcal{T}}$ is equivalent to solve the subproblem:
\begin{equation}\label{equupt1}
		\begin{split}
		\dot{\mathcal{T}}=&\mathop{{\rm{arg\, min}}}\limits_{P_{\Omega}(\dot{\mathcal{T}})=P_{\Omega}(\dot{\mathcal{X}})}\ \sum_{k=2}^{N}\mathfrak{R}(\langle\dot{\mathcal{Y}}_{k},\dot{\mathcal{T}}-\dot{\mathcal{M}}^{(k)}\rangle)
		+\frac{\mu_{k}}{2}\|\dot{\mathcal{T}}-\dot{\mathcal{M}}^{(k)}\|_{F}^{2} \\
		=&\mathop{{\rm{arg\, min}}}\limits_{P_{\Omega}(\dot{\mathcal{X}})=P_{\Omega}(\dot{\mathcal{T}})}\ \sum_{k=2}^{N}\frac{\mu_{k}}{2}\|\dot{\mathcal{T}}-\dot{\mathcal{M}}^{(k)}+\frac{\dot{\mathcal{Y}}^{(k)}}{\mu_{k}}\|_{F}^{2}
	\end{split}
\end{equation}
It is easy to check that the solution of (\ref{equupt1}) is given by:
\begin{equation}\label{equupt2}
	\dot{\mathcal{T}}=P_{\Omega^{c}}\bigg(\frac{\sum_{k=2}^{N}\big(\dot{\mathcal{M}}^{(k)}-\frac{\dot{\mathcal{Y}}^{(k)}}{\mu_{k}}\big)}{N-1}\bigg)+P_{\Omega}(\dot{\mathcal{X}}),
\end{equation}
where $\Omega^{c}$ is the complement of $\Omega$.

\textbf{Update $\dot{\mathcal{Y}}^{(k)}$}: The Lagrange multiplier $\dot{\mathcal{Y}}^{(k)}$ is updated by:
\begin{equation}\label{equupy}
	\dot{\mathcal{Y}}^{(k)}=\dot{\mathcal{Y}}^{(k)}+\mu_{k}(\dot{\mathcal{T}}-\dot{\mathcal{M}}^{(k)}).
\end{equation}

To speed up convergence, each iteration we also update $\mu_{k}$ by: $\mu_{k}=\min(\mu_{max}, \rho\mu_{k})$, where $\mu_{max}$ is the
default maximum of $\mu_{k}$,  $\rho>1$ is a constant parameter.

Finally, the proposed LRQTC algorithm is summarized in Algorithm \ref{alg:qtrnnm}.
\begin{algorithm}
	\caption{Our proposed LRQTC algorithm.}
	\label{alg:qtrnnm}
	\begin{algorithmic}
	\REQUIRE The observed $N$-th order quaternion tensor $\dot{\mathcal{T}}\in \mathbb{H}^{I_{1}\times I_{2}\times \ldots \times I_{N}}$ with $\Omega$ (the index of observed elements), $\{\alpha_{k}\}_{k=2}^{N}$, $\mu_{\max}$ and $\rho$.
	\STATE \textbf{Initialize} $\{\dot{\mathcal{M}}^{(k)}\}_{k=2}^{N}$, $\{\dot{\mathcal{Y}}^{(k)}\}_{k=2}^{N}$, and $\{\mu_{k}\}_{k=2}^{N}$.
	\STATE \textbf{Repeat}
	\FOR {$k=2$ to $N$}
	\STATE Update $\dot{\mathcal{M}}^{(k)}$ via (\ref{equupm2});
	\ENDFOR
	\STATE Update $\dot{\mathcal{T}}$ via (\ref{equupt2}) (the updated one is labled by $\widetilde{\dot{\mathcal{T}}}$).
	\FOR {$k=2$ to $N$}
	\STATE Update $\dot{\mathcal{Y}}^{(k)}$ via (\ref{equupy});
	\STATE Update $\mu_{k}$ via $\mu_{k}=\min(\mu_{max}, \rho\mu_{k})$.
	\ENDFOR
	\STATE \textbf{Until} $\frac{\|\dot{\mathcal{T}}-\widetilde{\dot{\mathcal{T}}}\|_{F}}{\|\widetilde{\dot{\mathcal{T}}}\|_{F}}<10^{-5}$ or reach the preset maximum number of iterations.
	\ENSURE  \text{The recovered quaternion tensor}\  $\widetilde{\dot{\mathcal{T}}}$.
	\end{algorithmic}
\end{algorithm}

\section{Experiments and results}\label{sec:main_3_3}
In this section, we will first elaborate on how to utilize the proposed LRQTC model for color image inpainting and then present the experimental results.

\subsection{Color image inpainting}\label{sec:main_3}
Because a color image is essentially a quaternion matrix (a second-order quaternion tensor), it is necessary to increase the order of the quaternion matrix in order to effectively utilize the proposed LRQTC method. Recently, the overlapping ket augmentation (OKA) as a tensor augmentation technique was developed in \cite{zhang2022effective} for increasing the order of tensors.
OKA is an improvement upon KA \cite{bengua2017efficient} as it overcomes the visual flaws caused by reshaping and eliminates the blocking artifacts introduced by KA. Therefore, in order to increase the order of quaternion matrices used for representing color images, we apply OKA to quaternion matrices. Due to the similarity in the process of applying OKA to tensors \cite{zhang2022effective} and quaternion matrices, we will not reiterate it here. Finally, we summarize the proposed entire process of color image inpainting in Figure \ref{imageinpainting_fig}.
\begin{figure*}[htbp]
	\centering
	\includegraphics[width=13cm,height=7cm]{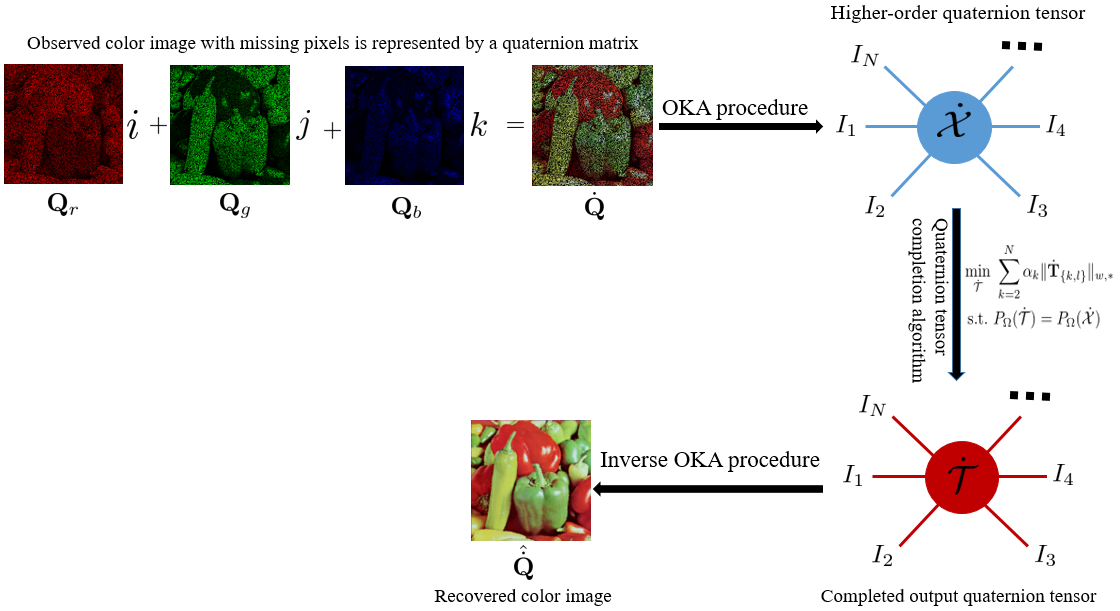} 
	\caption{The entire process of color image inpainting.}
	\label{imageinpainting_fig}
\end{figure*}

\subsection{Experimental results}
\label{sec:main_4}
To validate the effectiveness of our color image inpainting method, we conducted extensive experiments using a diverse range of images, including natural color images, color medical images, and color face images. We compare our proposed method with several classic and state-of-the-art quaternion matrix and tensor completion methods, including t-SVD \cite{zhang2016exact}, TMac-TT \cite{bengua2017efficient}, TRLRF \cite{yuan2019tensor}, TRNNM \cite{huang2020provable}, LRQA-2 \cite{chen2019low}, LRQMC \cite{miao2021color}, and TQLNA \cite{yang2022quaternion}. In order to assess the performance of the proposed method, we considered not only visual quality but also utilized two commonly used quantitative quality metrics: Peak Signal-to-Noise Ratio (PSNR) and Structural Similarity Index (SSIM) \cite{wang2004image}. All the experiments are run in MATLAB $2014b$ under Windows $10$ on a personal computer with a $1.60GHz$ CPU and $8GB$ memory. 

\textbf{Natural color image inpainting}:
Five natural color images (shown in the first row of Figure \ref{test_image}) with a spatial resolution of $256 \times 256$ are utilized for the evaluation.  For our proposed method, the natural color images are transformed into ninth-order quaternion tensors of size $4\times4\times4\times4\times4\times4\times4\times4\times4$ using OKA. We set $\alpha_{k}=\frac{\omega_{k}}{\sum_{k=2}^{N}\omega_{k}}$ with $\omega_{k}=\min(\prod_{n=1}^{k-1}I_{n},\prod_{n=k}^{N}I_{n})$ for $k=2,3,\ldots,N$, $\mu_{\max}=10^6$, and $\rho=1.03$. We initialize $\dot{\mathcal{M}}^{(k)}=\dot{\mathcal{T}}$, $\dot{\mathcal{Y}}^{(k)}=\mathbf{0}$ for $k=2,3,\ldots,N$, and $\mu=\{0.5,0.5,0.001,10^{-4.1},10^{-4.1},0.001,0.5,0.5\}$. Furthermore, all the compared methods were implemented using their source codes, and the parameter configurations were set according to the recommendations provided in the original papers, with adjustments made to optimize performance as closely as possible.

\textbf{Color medical image inpainting}: Five color medical images (shown in the second
row of Figure \ref{test_image}) with a spatial resolution of $256 \times 256$ are utilized for the evaluation. The experimental settings are the same as those for natural color image inpainting.

\textbf{Color face image inpainting}: Five color face images (shown in the third
row of Figure \ref{test_image}) with a spatial resolution of $120 \times 165$ are utilized for the evaluation. For our proposed method, the color face images are transformed into eighth-order quaternion tensors of size $4\times4\times4\times4\times4\times4\times5\times4$ using OKA. The other experimental settings are the same as those for natural color image inpainting.

\begin{figure*}[htbp]
	\centering
	\includegraphics[width=10cm,height=6cm]{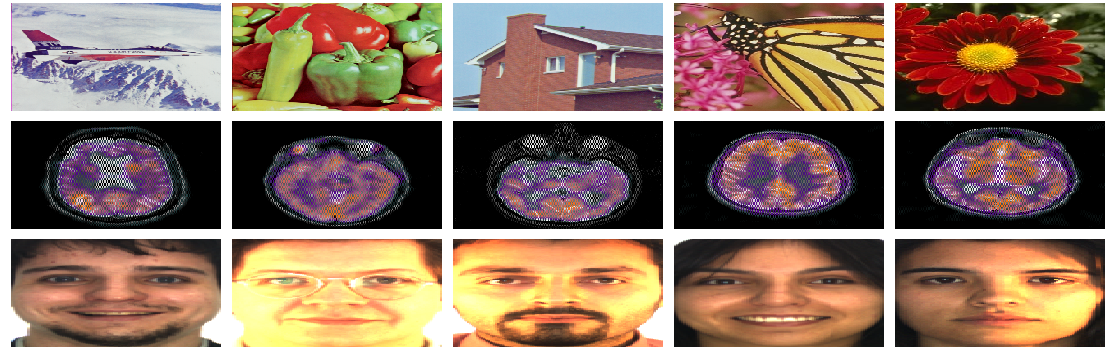} 
	\caption{ The tested natural color images (the first row), color medical images (the second row), and color face images (the third row).}
	\label{test_image}
\end{figure*}

\begin{table*}[htbp]
	\caption{Average PSNR and SSIM values (PSNR, SSIM) on the five {\rm{\textbf{natural color images}}} with five levels of sampling rates (SRs) (\textbf{bold} fonts denote the best performance).}
	\centering
	\resizebox{13cm}{1.8cm}{
		\begin{tabular}{|c|ccccc|}		
			\hline
			\diagbox{Methods}{SRs}&{\rm{SR}}=10\%&{\rm{SR}}=20\%&{\rm{SR}}=30\%&{\rm{SR}}=40\%& {\rm{SR}}=50\% \\ 
			\hline	
			t-SVD \cite{zhang2016exact} &17.266, 0.638&20.206, 0.766&22.639, 0.840&24.854, 0.891&27.176, 0.929\\
			\hline
			TMac-TT \cite{bengua2017efficient}
			&20.480, 0.778&22.415, 0.847&24.651, 0.903&26.229, 0.931&28.277, 0.956\\
			\hline
			TRLRF \cite{yuan2019tensor}&17.374, 0.633&20.324, 0.761&23.217, 0.851&25.609, 0.903&27.997, 0.940\\
			\hline
			TRNNM \cite{huang2020provable} &19.683, 0.801&22.743, 0.880&24.938, 0.920&26.841, 0.945&28.778, 0.963\\
			\hline
			LRQA-2 \cite{chen2019low} &18.063, 0.663&20.950, 0.781&23.241, 0.848&25.337, 0.894&27.545, 0.928\\
			\hline
			LRQMC \cite{miao2021color} &17.738, 0.677&20.838, 0.797&23.367, 0.863&25.589, 0.908&27.976, 0.942\\
			\hline
			TQLNA \cite{yang2022quaternion} &17.819, 0.658&21.124, 0.788&23.632, 0.859&25.870, 0.905&28.252, 0.939\\
			\hline
			\textbf{Ours} &\textbf{23.276}, \textbf{0.881}&\textbf{25.933}, \textbf{0.929}&\textbf{27.948}, \textbf{0.953}&\textbf{29.679}, \textbf{0.968}&\textbf{31.310}, \textbf{0.977}\\
			\hline
	\end{tabular}}
	\label{natural_color}
\end{table*}

\begin{figure*}[htbp]
	\centering
	\includegraphics[width=13cm,height=15cm]{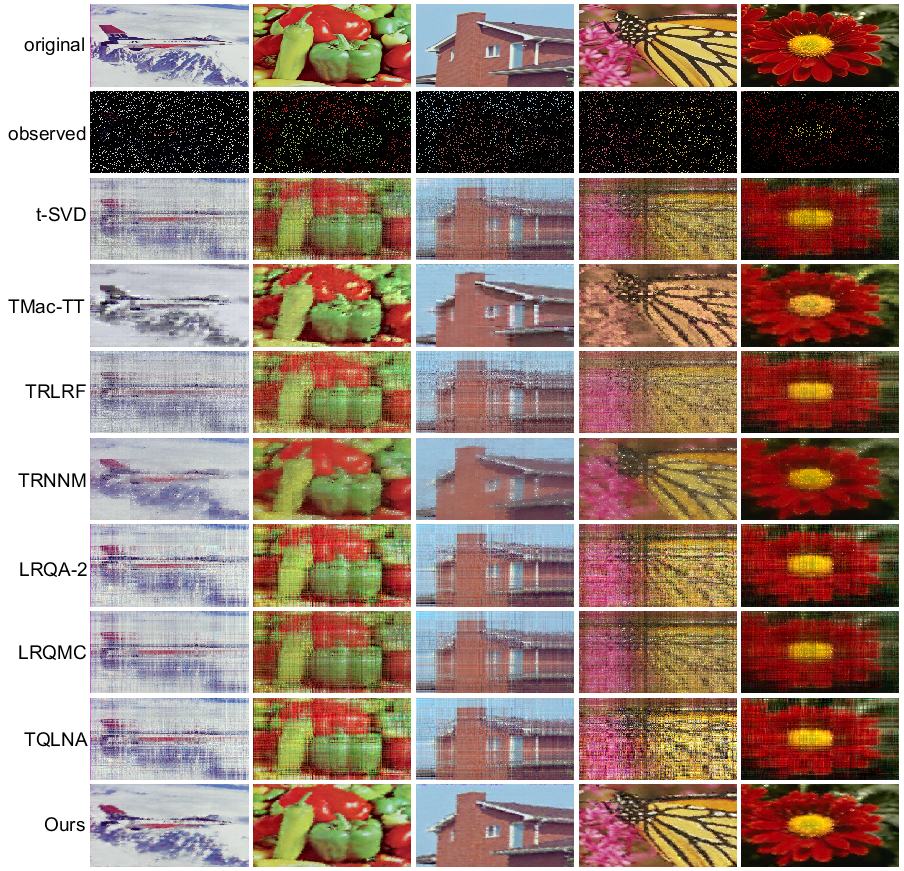} 
	\caption{Recovered natural color images for random missing with $SR = 10\%$.  From the top row to the bottom row: the original color images, the observed color images, the results recovered by t-SVD, TMac-TT, TRLRF, TRNNM, LRQA-2, LRQMC, TQLNA, and our method, respectively. \textbf{The figure
		is viewed better in zoomed PDF}.}
	\label{natural_color_01}
\end{figure*}

\begin{table*}[htbp]
	\caption{Average PSNR and SSIM values (PSNR, SSIM) on the five {\rm{\textbf{color medical images}}} with five levels of sampling rates (SRs) (\textbf{bold} fonts denote the best performance).}
	\centering
	\resizebox{13cm}{1.8cm}{
		\begin{tabular}{|c|ccccc|}		
			\hline
			\diagbox{Methods}{SRs}&{\rm{SR}}=10\%&{\rm{SR}}=20\%&{\rm{SR}}=30\%&{\rm{SR}}=40\%& {\rm{SR}}=50\% \\ 
			\hline	
			t-SVD \cite{zhang2016exact} &17.568, 0.575&19.816, 0.688&21.684, 0.761&23.383, 0.814&25.013, 0.856\\
			\hline
			TMac-TT \cite{bengua2017efficient}
			&20.139, 0.714&21.980, 0.806&23.937, 0.869&25.668, 0.910&27.192, 0.934\\
			\hline
			TRLRF \cite{yuan2019tensor}&17.142, 0.233&19.720, 0.420&21.686, 0.560&23.221, 0.638&24.771, 0.712\\
			\hline
			TRNNM \cite{huang2020provable} &17.982, 0.639&21.340, 0.797&23.884, 0.875&26.142, 0.922&28.610, 0.957\\
			\hline
			LRQA-2 \cite{chen2019low} &18.099, 0.590&20.366, 0.695&22.063, 0.760&23.662, 0.811&25.271, 0.851\\
			\hline
			LRQMC \cite{miao2021color} &17.687, 0.591&20.025, 0.702&21.932, 0.775&23.586, 0.826&25.449, 0.871\\
			\hline
			TQLNA \cite{yang2022quaternion} &17.899, 0.597&20.589, 0.714&22.350, 0.776&23.937, 0.824&25.527, 0.863\\
			\hline
			\textbf{Ours} &\textbf{22.339}, \textbf{0.836}&\textbf{24.553}, \textbf{0.894}&\textbf{26.111}, \textbf{0.923}&\textbf{27.608}, \textbf{0.944}&\textbf{29.160}, \textbf{0.960}\\
			\hline
	\end{tabular}}
	\label{color_medical}
\end{table*}
\begin{figure*}[htbp]
	\centering
	\includegraphics[width=13cm,height=15cm]{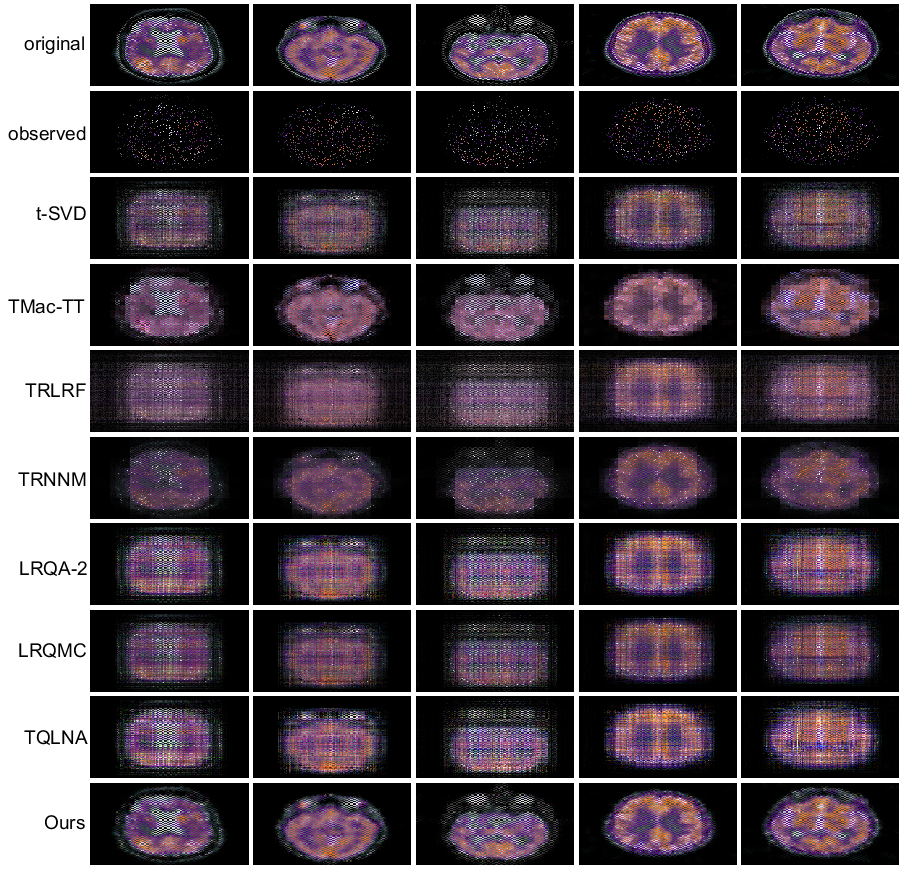} 
	\caption{Recovered color medical images for random missing with $SR = 10\%$. From the top row to the bottom row: the original color images, the observed color images, the results recovered by t-SVD, TMac-TT, TRLRF, TRNNM, LRQA-2, LRQMC, TQLNA, and our method, respectively. \textbf{The figure
			is viewed better in zoomed PDF}.}
	\label{color_medical_01}
\end{figure*}

\begin{table*}[htbp]
	\caption{Average PSNR and SSIM values (PSNR, SSIM) on the five {\rm{\textbf{color face images}}} with five levels of sampling rates (SRs) (\textbf{bold} fonts denote the best performance).}
	\centering
	\resizebox{13cm}{1.8cm}{
		\begin{tabular}{|c|ccccc|}		
			\hline
			\diagbox{Methods}{SRs}&{\rm{SR}}=10\%&{\rm{SR}}=20\%&{\rm{SR}}=30\%&{\rm{SR}}=40\%& {\rm{SR}}=50\% \\ 
			\hline	
			t-SVD \cite{zhang2016exact} &18.416, 0.789&22.761, 0.885&25.984, 0.932&28.515, 0.955&31.317, 0.972\\
			\hline
			TMac-TT \cite{bengua2017efficient}
			&22.862, 0.890&28.633, 0.964&31.093, 0.978&32.952, 0.984&34.534, 0.988\\
			\hline
			TRLRF \cite{yuan2019tensor}&18.424, 0.784&22.422, 0.867&25.248, 0.919&27.554, 0.945&29.689, 0.962\\
			\hline
			TRNNM \cite{huang2020provable} &22.452, 0.916&26.878, 0.958&29.508, 0.974&31.729, 0.982&33.940, 0.989\\
			\hline
			LRQA-2 \cite{chen2019low} &20.021, 0.817&24.148, 0.901&26.908, 0.940&29.323, 0.960&31.674, 0.973\\
			\hline
			LRQMC \cite{miao2021color} &19.172, 0.819&23.216, 0.897&26.516, 0.938&28.423, 0.956&31.132, 0.971\\
			\hline
			TQLNA \cite{yang2022quaternion} &19.585, 0.809&24.160, 0.900&27.347, 0.943&29.902, 0.965&32.901, 0.979\\
			\hline
			\textbf{Ours} &\textbf{26.708}, \textbf{0.953}&\textbf{30.050}, \textbf{0.976}&\textbf{32.332}, \textbf{0.984}&\textbf{34.288}, \textbf{0.989}&\textbf{35.990}, \textbf{0.992}\\
			\hline
	\end{tabular}}
	\label{color_face}
\end{table*}
\begin{figure*}[htbp]
	\centering
	\includegraphics[width=13cm,height=15cm]{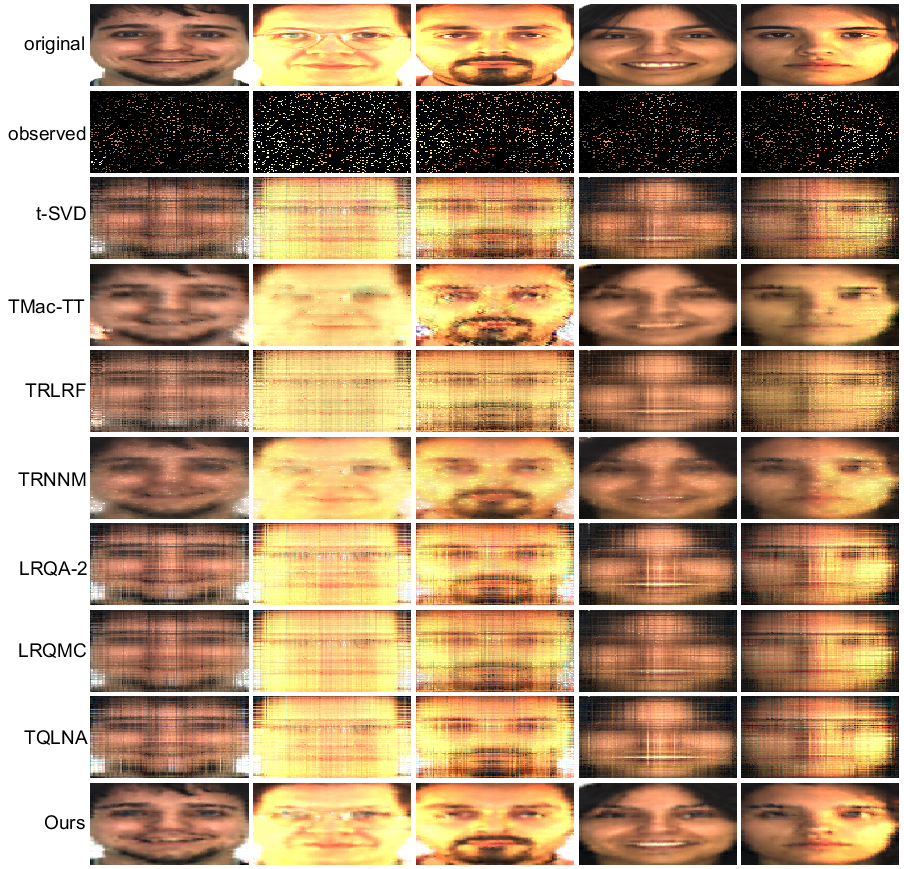} 
	\caption{Recovered color face images for random missing with $SR = 10\%$. From the top row to the bottom row: the original color images, the observed color images, the results recovered by t-SVD, TMac-TT, TRLRF, TRNNM, LRQA-2, LRQMC, TQLNA, and our method, respectively. \textbf{The figure
			is viewed better in zoomed PDF}.}
	\label{color_face_01}
\end{figure*}

\begin{figure*}[htbp]
	\centering
	\includegraphics[width=13cm,height=15cm]{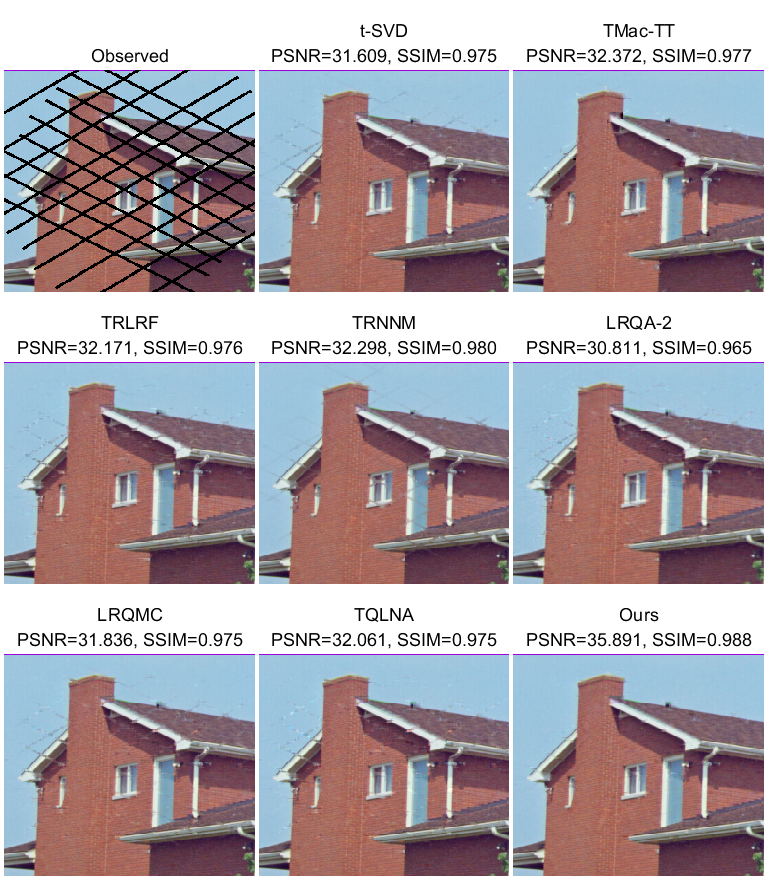} 
	\caption{Recovery of structurally missing natural color image: a comparison of various methods in terms of visual and quantitative metrics. \textbf{The figure
			is viewed better in zoomed PDF}.}
	\label{stru1}
\end{figure*}

\begin{figure*}[htbp]
	\centering
	\includegraphics[width=13cm,height=15cm]{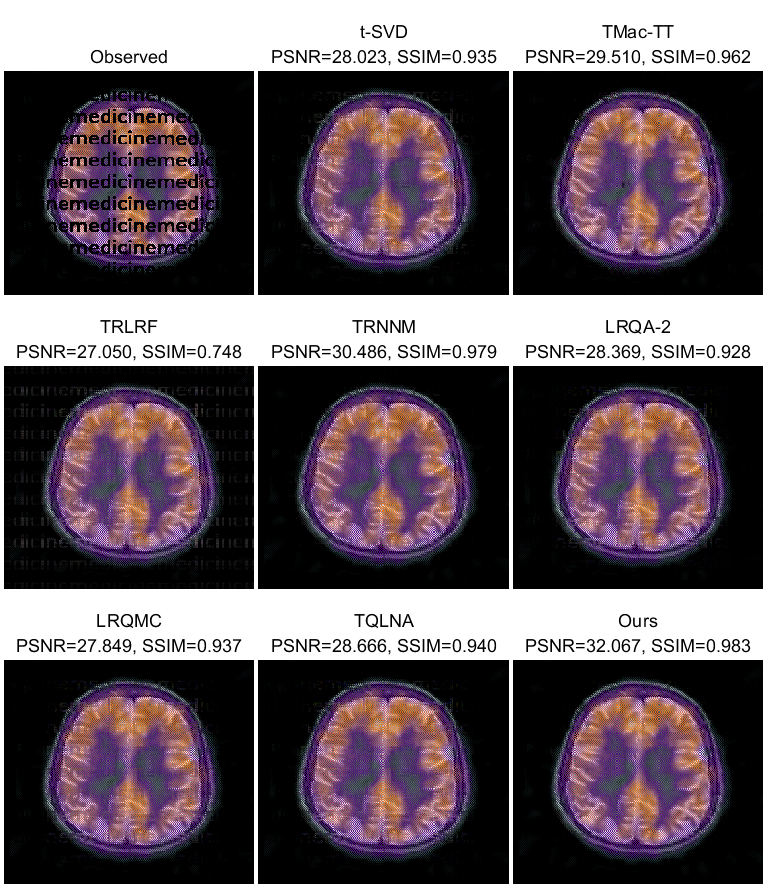} 
	\caption{Recovery of structurally missing color medical image: a comparison of various methods in terms of visual and quantitative metrics. \textbf{The figure
			is viewed better in zoomed PDF}.}
	\label{stru2}
\end{figure*}

\begin{figure*}[htbp]
	\centering
	\includegraphics[width=13cm,height=15cm]{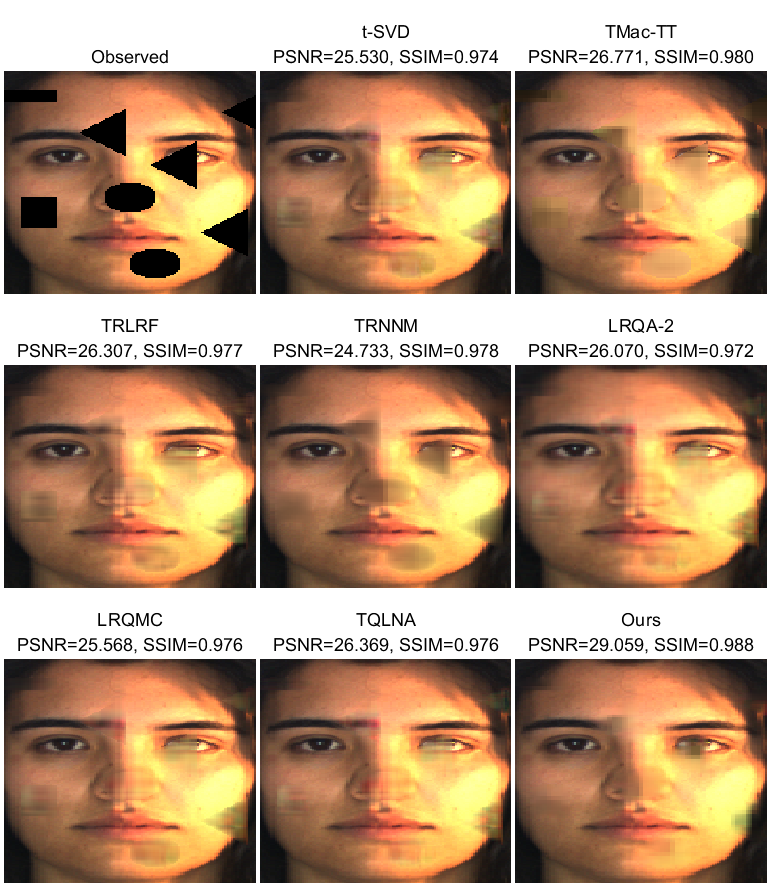} 
	\caption{Recovery of structurally missing color face image: a comparison of various methods in terms of visual and quantitative metrics. \textbf{The figure
			is viewed better in zoomed PDF}.}
	\label{stru3}
\end{figure*}

For random missing, we set five levels of sampling rates (SRs) which are ${\rm{SR}}= 10\%$, ${\rm{SR}} = 20\%$, ${\rm{SR}} = 30\%$, ${\rm{SR}} = 40\%$, and ${\rm{SR}} = 50\%$. The average PSNR and SSIM values are reported in Table \ref{natural_color},  Table \ref{color_medical}, and Table \ref{color_face} for the five natural color images,  color medical images, and color face images at five different levels of SRs. Figures \ref{natural_color_01}-\ref{color_face_01} visually demonstrate the recovered results obtained by various methods for natural color images, color medical images, and color face images at ${\rm{SR}}=10\%$. Furthermore, we also validated the performance of these methods in recovering color images with structural missing. We conducted experiments on randomly selected one color image from each of the three categories, and the visual and quantitative results are shown in Figures \ref{stru1}-\ref{stru3}. From these extensive experiments, it is evident that our proposed method exhibits significant advantages both visually and in terms of quantitative metrics when compared to both quaternion-based methods and TR-based methods. This aligns with our expectation of complementary strengths between quaternions and TR. Additionally, we observed that these methods perform poorly when recovering color images with large areas of complete loss, as illustrated in Figure \ref{stru3}. While our proposed method is relatively acceptable visually, the recovery of edges remains less than ideal. Hence, for inpainting tasks involving color images with large areas of complete loss, there is a requirement for specific model improvements in forthcoming research.

\section{Conclusions}
\label{sec:main_5}

In this paper, we have defined the QTLR decomposition of quaternion tensors and introduced the relevant theory. The QTLR decomposition combines the advantages of both quaternions and TR decomposition, providing a new theoretical foundation for the field of color image processing. This constitutes the core research focus of this paper. Furthermore, as an example of the application of the QTLR decomposition, we have proposed a method for color image inpainting. Specifically, we define the circular unfolding of quaternion tensors, establish a correlation between the rank of circular unfolding quaternion matrices and QTLR-rank, and leverage this connection to propose an LRQTC model for color image inpainting. The experiments provide evidence that the LRQTC method we introduce showcases exceptional performance across a spectrum of color image inpainting tasks. Regardless of whether compared to existing quaternion matrix-based methods or TR-based methods, our approach exhibits significant advantages both visually and in terms of quantitative metrics.
This underscores the substantial potential inherent in the fusion of tensor TR with quaternions. 

Nevertheless, owing to the non-commutative nature of quaternion multiplication, QTLR has yet to attain the same degree of theoretical robustness as TR. For instance, concerning the learning algorithm for the QTLR format, we have put forth solely a QTLR-QSVD approach. Ensuring the validity of Theorem \ref{theor1}, in which (\ref{rankthe}) holds, entails the imposition of constraints such as $l=N-k+1$, among others. These aspects underscore the need for additional refinement and enhancement in our forthcoming research endeavors.

\bibliographystyle{siamplain}
\bibliography{references}
\end{document}